\documentclass[11pt]{article}

\usepackage{epsfig,epsf,fancybox}
\usepackage{amsmath}
\usepackage{mathrsfs}
\usepackage{amssymb}
\usepackage{graphicx}
\usepackage{color}
\usepackage{multirow}
\usepackage{paralist}
\usepackage{verbatim}
\usepackage{galois}
\usepackage{algorithm}
\usepackage[noend]{algpseudocode}
\usepackage{boxedminipage}
\usepackage{booktabs}
\usepackage{accents}
\usepackage{stmaryrd}
\usepackage[table]{xcolor}
\usepackage{hhline}

\usepackage{subfig}

\usepackage{natbib}

\usepackage{url}
\usepackage[colorlinks,linkcolor=magenta,citecolor=blue, pagebackref=true,backref=true]{hyperref}
\renewcommand*{\backrefalt}[4]{%
    \ifcase #1 \footnotesize{(Not cited.)}%
    \or        \footnotesize{(Cited on page~#2.)}%
    \else      \footnotesize{(Cited on pages~#2.)}%
    \fi}

\newtheorem{theorem}{Theorem}[section]

\newtheorem{lemma}[theorem]{Lemma}

\newtheorem{assumption}[theorem]{Assumption}

\numberwithin{equation}{section}

\newcommand{\EE}{\mathbb{E}}

\newcommand{\s}{\mathbf s}

\newcommand{\argmin}{\mathop{\rm argmin}}

\newcommand{\LCal}{\mathcal{L}}

\newcommand{\DCal}{\mathcal{D}}

\newcommand{\br}{\mathbb{R}}

\newcommand{\ba}{\begin{array}}
\newcommand{\ea}{\end{array}}

\newcommand{\FCal}{\mathcal{F}}

\begin{document}

\begin{center}

{\bf{\LARGE{The Row Normalization Puzzle in Muon}}}

\vspace*{.2in}
{\large{
\begin{tabular}{c}
Jiayu Zhang \and Tianyi Lin \\
\end{tabular}
}}

\vspace*{.2in}

\begin{tabular}{c}
Department of Industrial Engineering and Operations Research \\ 
Columbia University
\end{tabular}

\vspace*{.2in}

\today

\vspace*{.2in}

\begin{abstract}
This paper examines how row-wise renormalization affects Muon, focusing on the gap between NorMuon's worst-case guarantees and its practical performance~\citep{Li-2026-Normuon}. Despite its growing adoption and promising performance in large language model (LLM) pretraining, NorMuon's worst-case guarantees remain poorly understood. One fundamental question is: \textit{Does row normalization yield provable convergence gains, potentially through its interaction with approximate polar computation and exponential moving-average momentum?} Our results show that row normalization introduces a dimension-dependent factor in the worst-case iteration complexity under the operator-norm geometry, which persists even with exact polar computation and any fixed momentum parameters. Indeed, we establish an \textit{algorithm-dependent} lower bound and a matching upper bound in deterministic settings, and extend our upper bound analysis to stochastic settings. Both upper-bound analyses allow approximate polar computation. Experiments show that NorMuon is slower than Muon on synthetic problems inspired by our worst-case construction, yet outperforms Muon in LLM pretraining. These findings sharpen the puzzle of why row normalization helps in practice and complement the recent findings of~\citet{Dewulf-2026-Aurora}.
\end{abstract}
\end{center}

\section{Introduction}
Large-scale neural network training has driven major advances in vision and language~\citep{Krizhevsky-2012-Imagenet, He-2016-Deep, Vaswani-2017-Attention}, with large language model (LLM) pretraining placing increasing demands on optimization efficiency~\citep{Team-2025-Kimi}. Progress depends not only on data and computational resources, but also on optimizers that exploit the structure of neural network parameters. Among recent methods, Muon~\citep{Jordan-2024-Muon} has attracted attention for its training efficiency and scalability~\citep{Liu-2025-Muon}. Rather than applying coordinate-wise rescaling, Muon updates weight matrices using the polar factor of gradient momentum. Together with related methods such as Scion~\citep{Pethick-2025-Training}, it reflects a shift toward optimizer design based on input-output matrix geometry. More recently, NorMuon~\citep{Li-2026-Normuon} introduces row-wise second-moment normalization to balance update magnitudes across neurons and reports improved performance. Its practical relevance extends beyond optimizer benchmarks: NorMuon was used to train Datadog's Toto~2.0 forecasting models~\citep{Khwaja-2026-Toto}, while similar normalization has been incorporated into the nanochat LLM training pipeline~\citep{Karpathy-2025-Nanochat}. NVIDIA's Megatron Core also supports NorMuon-style updates as an optimizer option~\citep{NVIDIA-2026-Megatron}. These empirical gains motivate a closer examination of the convergence consequences of row normalization.

This design philosophy has a long history in optimization. Indeed, the simplex method exploits the polyhedral geometry of linear optimization~\citep{Dantzig-1963-Linear}, while gradient descent applies carefully designed step sizes in nonlinear optimization~\citep{Polyak-1963-Gradient}. Row normalization follows a related principle by respecting the neuron-wise organization of matrix parameters. Yet, geometric insights and practical success do not necessarily imply favorable worst-case guarantees. The simplex method is effective in practice, but needs exponentially many iterations in the worst case~\citep{Klee-1972-Good}. A related tension appears in gradient-based optimization. Step-size scheduling improves practical neural network training, from cosine annealing with warm restarts~\citep{Loshchilov-2017-SGDR} to warmup-stable-decay schedules for LLM pretraining~\citep{Wen-2025-Understanding}. In theory, step-size scheduling alone can accelerate gradient descent beyond its classical $O(T^{-1})$ rate~\citep{Altschuler-2025-Acceleration-I, Altschuler-2025-Acceleration-II}. However, recent work shows that optimally chosen schedules cannot attain the algorithm-independent lower bound of $\Omega(T^{-2})$~\citep{Ye-2026-Optimal}. We continue this line of thought for matrix optimizers, investigating whether the practical benefits of row normalization are reflected in its worst-case guarantees.

The benefits of row normalization have been systematically studied in recent work~\citep{Dewulf-2026-Aurora}, which identified a mechanism behind the imbalance in Muon's neuron-wise updates: neurons with low row leverage can receive persistently small updates and will become effectively inactive. NorMuon mitigates this issue by using exponential moving averages of squared row norms to rebalance update magnitudes across neurons. Their findings provide empirical evidence and alternative explanations for the practical value of balanced neuron updates, motivating a rigorous analysis of the worst-case convergence consequences of row normalization.

In this paper, we investigate row normalization through gradient-update alignment in operator-norm geometry. Our key construction uses a rank-one gradient with unequal row magnitudes: normalizing its polar factor balances the rows but weakens alignment with the gradient. We embed this mechanism in a smooth objective whose gradient remains constant along the iterates, isolating a directional distortion that persists under fixed momentum parameters and deterministic adaptive step sizes. A complementary descent analysis controls the same alignment together with momentum error and polar approximation.
Importantly, balanced neuron updates alone do not guarantee faster convergence. Our experimental results illustrate this distinction: NorMuon is slower than Muon on synthetic problems, yet outperforms it in LLM pretraining. Our findings sharpen the row normalization puzzle and further call for a deeper understanding of which key features of practical training make normalization beneficial despite its worst-case cost.

\paragraph{Contributions.} We focus on one fundamental question: does row normalization yield provable convergence gains, potentially through its interaction with approximate polar computation and exponential moving-average momentum? Our contributions can be summarized as follows:
\begin{enumerate}
\item We establish an algorithm-dependent lower bound of $\Omega(mL\epsilon^{-2})$ for NorMuon on smooth optimization over $\br^{m\times n}$, compared with Muon's $O(L\epsilon^{-2})$ rate in operator-norm geometry, where $L>0$ is the smoothness parameter. We prove upper bounds of $O(mL\epsilon^{-2})$ in the deterministic setting and $O(mL\epsilon^{-2}+m^2L\sigma^2\epsilon^{-4})$ in the stochastic setting, where $\sigma^2$ is a bound on the noise variance measured in the nuclear norm. Our analysis accommodates polar approximation. 
\item We conduct experiments on synthetic problems and LLM pretraining. The contrasting results separate the theoretical cost of row normalization from its practical benefits, complementing the findings of~\citet{Dewulf-2026-Aurora} and showing that improved neuron balance alone does not imply uniformly faster optimization.
\end{enumerate}
\paragraph{Related works.} Our work is most closely related to the literature on structured matrix-aware optimization and algorithm-dependent convergence analysis. Due to space limitations, we defer our comments on other relevant topics to Appendix~\ref{app:further-related-work}. Muon exploits matrix structure through polar-factor updates of gradient momentum, connecting spectral descent to efficient LLM pretraining~\citep{Jordan-2024-Muon, Bernstein-2024-Old, Liu-2025-Muon}. However, controlling singular values does not generally ensure balanced update magnitudes across neurons. To address this issue, NorMuon introduces row-wise second-moment normalization~\citep{Li-2026-Normuon}, while Muon+ applies stateless row/column normalization after orthogonalization~\citep{Zhang-2026-Muon}. Their ablation studies provide evidence of lower validation loss and highlight the importance of normalization direction. MuonEq instead normalizes momentum before orthogonalization, reporting lower validation loss and improved robustness~\citep{Chang-2026-Muoneq}. Dion3 incorporates row normalization into a variant with selective orthogonalization~\citep{Amsel-2026-Dion3}, while Aurora addresses low-leverage neurons in tall matrices, reporting reduced neuron death and lower validation loss~\citep{Dewulf-2026-Aurora}. These empirical findings motivate our analysis of the worst-case convergence cost of row normalization in NorMuon.

Algorithm-dependent convergence analysis studies specified update rules rather than oracle classes. Classical works have established sharp worst-case analyses for line search methods~\citep{Cartis-2010-Complexity, Cartis-2018-Worst} and many other first-order methods~\citep{Drori-2014-Performance, Taylor-2017-Smooth}. Recent works characterized the pros and cons of predetermined step-size schedules in gradient descent~\citep{Grimmer-2024-Provably, Altschuler-2025-Acceleration-I, Altschuler-2025-Acceleration-II, Ye-2026-Optimal}. For heavy-ball methods, the convergence analyses and cycling counterexamples~\citep{Ghadimi-2015-Global, Lessard-2016-Analysis} were followed by provable non-acceleration results for smooth strongly convex objectives~\citep{Goujaud-2025-Provable}, while acceleration becomes possible through over-relaxed recurrences~\citep{Wei-2025-Accelerated} or randomized scheduling on smooth convex objectives~\citep{He-2026-Randomized}. Our work establishes the dimension-dependent lower bound specifically for NorMuon with exact polar computation under any deterministic adaptive nonnegative step-size policy and any fixed momentum parameters. 

\section{Preliminaries and Technical Background}\label{sec:prelim}
We review the setup for LLM pretraining and the update rules of Muon and NorMuon, followed by the assumptions and basic properties used in our analysis.

\subsection{LLM pretraining and matrix optimization}
We consider an autoregressive language model with parameters $\theta$ and vocabulary $\mathcal{V}$. Given a token sequence $\s = (s_1,\ldots,s_{|\s|}) \in \mathcal{V}^{|\s|}$, the model assigns probability $p_\theta(\s)=\prod_{k=1}^{|\s|} p_\theta(s_k \mid s_{<k})$, where $s_{<k}$ denotes the first $k-1$ tokens. For a data distribution $\DCal$ over such sequences, the goal of pretraining is to minimize the following expected next-token prediction loss 
\begin{equation*}
\LCal(\theta) = -\EE_{\s \sim \DCal} \left[\tfrac{1}{|\s|} \sum_{k=1}^{|\s|} \log p_\theta(s_k\mid s_{<k})\right],
\end{equation*}
where the model parameters $\theta$ consist of weight matrices in attention and feed-forward layers~\citep{Vaswani-2017-Attention}. Muon and its variants apply their updates to each weight matrix separately. As such, we consider a single matrix variable in our theoretical analysis. Formally, we have
\begin{equation*}
\min_{X\in\mathbb{R}^{m\times n}} F(X),
\end{equation*}
where $F:\mathbb{R}^{m\times n}\to\mathbb{R}$ is differentiable but not necessarily convex.

Muon~\citep{Jordan-2024-Muon} updates a weight matrix using the polar factor of its gradient momentum. For a nonzero matrix $Z \in \br^{m\times n}$ with compact singular value decomposition (SVD) $Z = U\Sigma V^\top$, where $\Sigma \in \br^{r\times r}$ and $r=\operatorname{rank}(Z)$, we define $\operatorname{Polar}(Z)=UV^\top$, and set $\operatorname{Polar}(0)=0$. Let $\|\cdot\|_F$ denote the Frobenius norm. Starting from $M_0=0$, the exact Muon recursion is
\begin{equation*}
M_{t+1}=\beta M_t+(1-\beta)G(X_t,\xi_t),\quad O_{t+1}=\operatorname{Polar}(\tfrac{M_{t+1}}{\|M_{t+1}\|_F}),\quad X_{t+1}=X_t-\eta_t O_{t+1},
\end{equation*}
where $G(X_t,\xi_t) \in \br^{m\times n}$ is a stochastic gradient at $X_t$, $\beta \in [0,1)$ is the momentum parameter, and $\eta_t > 0$ is the step size. In practice, Newton-Schulz iterations~\citep{Jordan-2024-Muon} or PolarExpress~\citep{Amsel-2026-Polar} can approximate the polar factor using the normalized momentum $\frac{M_{t+1}}{\|M_{t+1}\|_F}$ as input.

NorMuon~\citep{Li-2026-Normuon} adds row-wise normalization to this update. Given the possibly approximate polar factor $O_{t+1}$, it maintains an exponential moving average of the mean squared entries of each row and rescales that row accordingly:
\begin{equation*}
v_{t+1,i} = \beta_2v_{t,i}+(1-\beta_2)\tfrac{\|[O_{t+1}]_{i,:}\|_2^2}{n},\quad [\overline{O}_{t+1}]_{i,:}=\tfrac{[O_{t+1}]_{i,:}}{\sqrt{v_{t+1,i}}+\alpha}, \quad \textnormal{for each } i=1,\ldots,m,
\end{equation*}
where $[O]_{i,:}$ denotes the $i^\textnormal{th}$ row of $O$, $\|\cdot\|_2$ denotes the Euclidean norm, $v_0=0$, $\beta_2 \in [0,1)$, and $\alpha \geq 0$ is a stabilization constant. Then, following the rescaling of~\citep{Liu-2025-Muon, Li-2026-Normuon}, we set
\begin{equation*}
D_{t+1}=\tfrac{0.2\sqrt{mn}}{\|\overline{O}_{t+1}\|_F}\overline{O}_{t+1}, \quad X_{t+1}=X_t-\eta_tD_{t+1}.
\end{equation*}
We see that row normalization changes the relative magnitudes of the rows, while the final rescaling fixes the Frobenius norm of every nonzero update. The full scheme is summarized in Algorithm~\ref{algorithm:normuon}, including approximate polar computation as specified in Assumption~\ref{assumption:polar}.
\begin{algorithm}[!t]
\caption{NorMuon}\label{algorithm:normuon}
\begin{algorithmic}[1]
\State \textbf{Input:} $X_0 \in \br^{m\times n}$, $\ell,\delta,\beta_1, \beta_2 \in [0,1)$, $\eta_t,\alpha\geq0$, $v_0=0\in\mathbb{R}^m$, and $M_0=0\in\mathbb{R}^{m\times n}$. 
\For{$t=0,\ldots,T-1$}
\State $G_t\gets G(X_t,\xi_t)$, where $\xi_t$ is independent of $\{X_0,\xi_0,\ldots,\xi_{t-1}\}$.
\State $M_{t+1} \gets \beta_1M_t+(1-\beta_1)G_t$.
\State $O_{t+1} \gets \operatorname{Polar}_{\ell,\delta}(\frac{M_{t+1}}{\|M_{t+1}\|_F})$.
\State $v_{t+1} \gets \beta_2v_t+\frac{1-\beta_2}{n}\operatorname{diag}(O_{t+1}O_{t+1}^\top)$.
\State $[\overline{O}_{t+1}]_{i,:}\gets\frac{[O_{t+1}]_{i,:}}{\sqrt{v_{t+1,i}}+\alpha}$ for all $i\in \{1,\ldots,m\}$.
\State $D_{t+1}\gets\tfrac{0.2\sqrt{mn}}{\|\overline{O}_{t+1}\|_F}\overline{O}_{t+1}$.
\State $X_{t+1}\gets X_t-\eta_tD_{t+1}$.
\EndFor
\Return $\widetilde{X} = X_{\widetilde{T}}$ where $\widetilde{T}$ is sampled independently and uniformly from $\{0,\ldots,T-1\}$. 
\end{algorithmic}
\end{algorithm}

\subsection{Assumptions and basic properties}
Throughout, we assume that $F^\star = \inf_{X \in \br^{m\times n}} F(X) > -\infty$ and let $\Delta\geq0$ be an upper bound on $F(X_0)-F^\star$. For $X \in \br^{m\times n}$, we define $\|X\|_\textnormal{op}=\max_{\|z\|_2 \leq 1} \|Xz\|_2$, and $\|X\|_\textnormal{nuc}$ as the sum of its singular values. These norms are dual under the Frobenius inner product $\langle X,Y\rangle=\operatorname{tr}(X^\top Y)$. Indeed, we have
\begin{equation*}
\max_{\|Y\|_\textnormal{op} \leq 1}\langle X,Y\rangle = \langle X,\operatorname{Polar}(X)\rangle = \|X\|_\textnormal{nuc}.
\end{equation*}
This implies that the exact Muon recursion minimizes a linear model over an operator-norm ball centered at $X_t$~\citep{Pethick-2025-Training}. In other words, we have $X_{t+1} \in \argmin_{\|X-X_t\|_\textnormal{op} \leq \eta_t} \langle X-X_t,M_{t+1}\rangle$. This motivates the operator-norm geometry used in our analysis. In particular, we impose the following assumptions on the objective and stochastic gradients.
\begin{assumption} \label{assumption:smooth}
There exists $L \geq 0$ such that $\|\nabla F(Y)-\nabla F(X)\|_\textnormal{nuc} \leq L\|Y-X\|_\textnormal{op}$ for all $X,Y \in \br^{m \times n}$.
\end{assumption}
\begin{assumption} \label{assumption:noise}
There exist $\sigma \geq 0$ and a stochastic gradient oracle $G: \br^{m\times n} \times \Xi \to \br^{m\times n}$ such that $\EE[G(X,\xi) \mid X] = \nabla F(X)$ and $\EE[\|G(X,\xi)-\nabla F(X)\|_\textnormal{nuc}^2 \mid X] \leq \sigma^2$ for all $X \in \br^{m\times n}$.
\end{assumption}
Assumption~\ref{assumption:smooth} implies the following descent inequality:
\begin{equation}
\label{eq:main-descent-ineq}
F(Y) \leq F(X)+\langle\nabla F(X),Y-X\rangle+\tfrac{L}{2}\|Y-X\|_\textnormal{op}^2, \quad \textnormal{for all } X,Y \in \br^{m\times n}. 
\end{equation}
Our analysis uses this inequality to relate the decrease in the objective to the alignment between the gradient and the update direction. In the deterministic setting, we have $G_t=\nabla F(X_t)$ and $\sigma=0$. To cover approximate polar computation, we impose a condition on the singular values of its output.
\begin{assumption} \label{assumption:polar}
There exist $\ell,\delta \in [0,1)$ and $\operatorname{Polar}_{\ell,\delta}:\br^{m\times n} \to \br^{m\times n}$ with $\operatorname{Polar}_{\ell,\delta}(0)=0$ such that the following statement holds. For every nonzero $Z \in \br^{m\times n}$ with $\|Z\|_F\leq 1$, let $Z=U\Sigma V^\top$ be a compact SVD, where $r=\operatorname{rank}(Z)$. Then $\operatorname{Polar}_{\ell,\delta}(Z)=U\widetilde{\Sigma}V^\top$ for some diagonal $\widetilde{\Sigma} \in \br^{r\times r}$ satisfying (i) $\widetilde{\Sigma}_{jj} \in [0, 1+\delta]$ and (ii) $\widetilde{\Sigma}_{jj} \in [1-\delta, 1+\delta]$ whenever $\Sigma_{jj} \geq \ell$, for all $j$. 
\end{assumption}
Assumption~\ref{assumption:polar} allows an approximate polar computation. If an input singular value is at least $\ell$, the output singular value must be within $\delta$ of $1$. For smaller input singular values, the corresponding output singular values lie between $0$ and $1+\delta$. As such, a smaller $\ell$ requires accurate approximation for more singular values, while a smaller $\delta$ requires greater accuracy. PolarExpress~\citep{Amsel-2026-Polar} uses polynomial iterations to adjust the singular values without changing the singular vectors and guarantees $\delta \leq (1-\ell^2)^{3^K}$ after $K$ iterations. 

We use $\min_{0\leq t<T}\|\nabla F(X_t)\|_\textnormal{nuc} \leq \epsilon$ and $\EE[\|\nabla F(X_{\widetilde{T}})\|_\textnormal{nuc}] \leq \epsilon$ to measure convergence in the deterministic and stochastic settings, and study the number of iterations that NorMuon requires to meet these criteria. In particular, we establish matching lower and upper bounds in the deterministic setting (Theorems~\ref{theorem:normuon-lower} and~\ref{theorem:normuon-upper-deterministic}), showing that row normalization incurs an extra factor of $m$ compared with Muon. We extend the upper bound to the stochastic setting (Theorem~\ref{theorem:normuon-upper-stochastic}), allowing approximate polar computation in both upper bound analyses.

\section{Main Results}\label{sec:results}
We establish matching upper and lower bounds for NorMuon in the deterministic setting and extend the upper bound to stochastic settings. We also explain the main ideas behind our analysis, with full proofs deferred to Appendix~\ref{app:proofs}. 

\subsection{Deterministic setting}
We establish an algorithm-dependent lower bound with exact polar computation. This isolates the effect of row normalization from errors in computing the polar factor.
\begin{theorem} \label{theorem:normuon-lower}
Let $m \geq 2$, $n \geq 1$, $\Delta,L>0$, and $\epsilon \in (0,\frac{1}{4}\sqrt{\Delta L})$. For any deterministic step-size rule $\{\eta_t\}_{t \geq 0}$ based on the observed history $\{(X_s,F(X_s),\nabla F(X_s))\}_{s \leq t}$, there exists $F: \br^{m\times n} \to \br$ satisfying Assumption~\ref{assumption:smooth} and $F(X_0)-F^\star \leq \Delta$ such that Algorithm~\ref{algorithm:normuon} with any fixed $\beta_1, \beta_2 \in [0,1)$, exact gradients, and $\ell=\delta=\alpha=0$ satisfies
\begin{equation*}
\|\nabla F(X_t)\|_\textnormal{nuc} > \epsilon \textnormal{ for all } 0 \leq t\leq T_0 = \left\lfloor\tfrac{m\Delta L}{32\epsilon^2}\right\rfloor.
\end{equation*}
\end{theorem}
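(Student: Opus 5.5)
The plan is to build a resisting function with two active directions. Along the first, the gradient has nuclear norm larger than $\epsilon$ and is constant. The second is a barrier that punishes steps which are too large in operator norm. The point of the construction is that row normalization makes the update $D_{t+1}$ have operator norm of order $\sqrt{m}$ times its useful component along the gradient.

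\textbf{Step 1 (NorMuon on a constant rank-one gradient).} Take $G=\epsilon' a b^\top$ with $\|b\|_2=1$ and $a\in\br^m$ having all entries nonzero but very unequal, e.g.\ $a_1\approx 1$ and $a_i=\gamma\ll 1$ for $i\ge 2$. Nonzero entries avoid the $0/0$ issue when $\alpha=0$. If $\nabla F(X_s)=G$ for all $s\le t$, then $M_{t+1}$ is a positive multiple of $G$ for any $\beta_1$, and $O_{t+1}=\hat a b^\top$ with $\hat a=a/\|a\|_2$. The moving average gives $v_{t+1,i}=(1-\beta_2^{t+1})\hat a_i^2/n$ for any $\beta_2$. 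Hence every row of $\overline O_{t+1}$ has the same norm, and
\[
D_{t+1}=0.2\sqrt{n}\,\operatorname{sign}(a)\,b^\top ,
\]
independent of $t$, $\beta_1$ and $\beta_2$.

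This gives three quantities:
\begin{itemize}
\item $\|D\|_\textnormal{op}=0.2\sqrt{mn}$;
\item $\langle G,D\rangle=0.2\sqrt n\,\epsilon'\|\hat a\|_1\approx 0.2\sqrt n\,\epsilon'$ (up to $(m-1)\gamma$), which is about $\|G\|_\textnormal{nuc}\|D\|_\textnormal{op}/\sqrt m$;
\item the orthogonal part $W=\operatorname{sign}(a)b^\top-\langle \hat a,\operatorname{sign}(a)\rangle \hat a b^\top=w b^\top$ with $w\perp \hat a$, which has $\|w\|_2\approx\sqrt{m}$.
\end{itemize}

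\textbf{Step 2 (the function).} Set $x(X)=\langle \hat a b^\top,X-X_0\rangle$ and $y(X)=\langle \hat w b^\top,X-X_0\rangle$ with $\hat w=w/\|w\|_2$, and define
\[
F(X)=\epsilon'\psi(x)+\chi(y).
\]
Here $\psi$ has slope $1$ on $[-x_\star,\infty)$, flattens smoothly below, and is bounded below, with $x_\star\approx \Delta/(2\epsilon')$. The barrier $\chi\ge 0$ vanishes on $|y|\le y_0$ and grows beyond with curvature $L/2$. Both $\hat a b^\top$ and $\hat w b^\top$ have nuclear norm $1$, so $|x(Y)-x(X)|\le\|Y-X\|_\textnormal{op}$, and likewise for $y$. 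Assumption~\ref{assumption:smooth} then reduces to bounding $\epsilon'|\psi''|$ and $|\chi''|$ by about $L/2$, which also fixes how sharply $\psi$ may bend at $-x_\star$.

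Every gradient has the form $b\,(\epsilon'\psi'(x)\hat a+\chi'(y)\hat w)^\top$. This is rank one, and since $\hat a\perp\hat w$ its nuclear norm is at least $\epsilon'\psi'(x)$. Taking $\epsilon'=2\epsilon$, the gradient therefore stays above $\epsilon$ as long as $x\ge -x_\star$, no matter what the barrier does. The value gap is $F(X_0)-F^\star\le\epsilon' x_\star+O(\epsilon'^2/L)\le\Delta$, using $\epsilon<\tfrac14\sqrt{\Delta L}$.

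\textbf{Step 3 (counting).} While $|y|\le y_0$, Step~1 shows the iterates lie on the line $X_0-\tau D$ with $\tau=\sum_s\eta_s$. Along this line $x$ decreases by $\approx 0.2\sqrt n\,\tau$ while $|y|$ grows by $\approx 0.2\sqrt{n}\sqrt{m}\,\tau$. So $x$ can reach $-x_\star$ only after $|y|$ has travelled about $\sqrt m\,x_\star$. Placing $y_0$ of order $\sqrt m$ times the per-step safe scale, a descent-type count (gain $\approx\eta\epsilon'\sqrt n$ against cost $\tfrac L2\eta^2\cdot 0.04\,mn$ from~\eqref{eq:main-descent-ineq}) limits each productive step to a decrease of about $\epsilon'^2/(Lm)$. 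Reaching $-x_\star$ then needs at least $\Delta/(\epsilon'^2/(Lm))\gtrsim m\Delta L/(32\epsilon^2)$ steps.

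Because the step-size rule is deterministic and the observed history on the line ($X_s$, the affine value $F(X_s)$, and the constant gradient $G$) does not depend on the yet-unchosen parameters $x_\star,y_0$, the sequence $\eta_t$ is fixed in advance. The adversary can then tune the construction after seeing it.

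\textbf{Main obstacle.} The hard case is a rule that takes a huge step, jumping past $y_0$ into the barrier. The gradient then picks up a large $\hat w$ component, the momentum direction rotates, and the update is no longer the fixed $D$. Two facts help:
\begin{itemize}
\item the nuclear norm stays at least $\epsilon'\psi'(x)>\epsilon$ while $x\ge -x_\star$, so only $x$-progress matters;
\item the per-step $x$-change is at most $\eta\|\widehat{\cdot}\|\le\eta\cdot 0.2\sqrt{mn}$ in general, but only $\approx\eta\,0.2\sqrt n$ while the $\hat a$ and $\hat w$ components keep the row-balanced structure.
\end{itemize}
What remains is to show that with gradients of the form $b(\cdot)^\top$ with coefficients only on $\hat a,\hat w$, the row-normalized update still has $x$-component at most $\|\cdot\|_\textnormal{op}/\sqrt m$. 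This should follow by symmetry if all $a_i$ for $i\ge2$ are equal. If it fails, I would instead make the barrier one-sided and steep enough that overshooting costs more function value than the budget $\Delta$ allows.
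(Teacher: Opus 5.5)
The gap is in Steps~2--3, and the obstacle you flag at the end is fatal rather than technical. Step~1 is correct and is exactly the mechanism the paper uses. Fixing $\{\eta_t\}$ from a prescribed transcript is also the paper's device.

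Your $\psi$ has slope $1$ on all of $[-x_\star,\infty)$, so a single long step crosses $x=-x_\star$. The barrier only makes $F$ large at the intermediate iterate, and nothing charges that to $\Delta$: $\Delta$ bounds $F(X_0)-F^\star$ only, and NorMuon is not a descent method. Your Step~3 count also runs the wrong way. The descent inequality~\eqref{eq:main-descent-ineq} upper bounds $F(X_{t+1})$, whereas an iteration lower bound needs the reverse: that no single step can make much progress.

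Concretely, take $\beta_1=\beta_2=0$, $a=(1,\gamma,\ldots,\gamma)$ with $0<\gamma<1$, and the rule $\eta_0=H$ (huge), $\eta_1=H\frac{1-\gamma}{1+\gamma}$. The rule can read $\gamma$ off $\nabla F(X_0)$.
\begin{itemize}
\item Since $D_1=0.2\sqrt{n}\,\mathbf{1}b^\top$, we get $x_1\ll -x_\star$, so $\psi'(x_1)$ is negligible. For the theorem to survive $t=1$, $X_1$ must then lie inside the barrier with $y_1<0$ and $|\chi'(y_1)|>\epsilon$.
\item Since $\hat w\propto(-(m-1)\gamma,1,\ldots,1)$, the gradient coefficient $\chi'(y_1)\hat w$ has sign pattern $(1,-1,\ldots,-1)$. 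Hence $D_2=0.2\sqrt{n}\,(1,-1,\ldots,-1)b^\top$.
\item This gives $X_2=X_0-\frac{0.4\sqrt{n}H}{1+\gamma}\,ab^\top$. So $y_2=0$, $x_2\ll -x_\star$, and $\|\nabla F(X_2)\|_\textnormal{nuc}=\epsilon'\psi'(x_2)\le\epsilon$ already at $t=2$.
\end{itemize}
Your proposed repairs do not close this. Bounding the $x$-component of the update by $\|\cdot\|_\textnormal{op}/\sqrt{m}$ controls progress per unit step size, not per iteration. A steeper barrier is capped by $L$, and its height is still never paid out of $\Delta$.

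The missing idea is to let the profile along the direction of motion depend on the prescribed step sizes, so that the iterates never leave the line however large the steps are. The paper moves along $D_\star$ and sets $F(X)=f(\langle D_\star,X-X_0\rangle)$ plus a quadratic in $P=H_\star-\lambda D_\star$ whose gradient equals $2\epsilon P$ everywhere on that line. Here $f'$ is the maximum of triangular bumps of height $2\epsilon\lambda$ and slope $L$ centered at the visited points $q_j$.

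Then $\nabla F(X_t)=2\epsilon H_\star$ at every iterate, so the transcript is the prescribed one for all $T_0$ steps. A step of any length lowers $F$ by at most one bump area, $4\epsilon^2\lambda^2/L<16\epsilon^2/(mL)$. This yields $F(X_0)-F^\star\le\frac{2\epsilon^2}{L}+\frac{4T_0\epsilon^2\lambda^2}{L}\le\Delta$.

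The same $1/\sqrt{m}$ misalignment you identified enters through $\lambda\approx 2/\sqrt{m}$. In the paper it becomes a per-iteration cap through the $L$-Lipschitz decay of $f'$ between visited points, not through a barrier in an orthogonal direction.
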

Muon achieves a dimension-independent rate of $O(\Delta L\epsilon^{-2})$ under the same smoothness assumption and stationarity criterion~\citep{Shen-2026-Convergence}. Theorem~\ref{theorem:normuon-lower} therefore shows that row normalization can introduce an extra factor of $m$, even when the momentum parameters and the adaptive step-size rule are chosen. 

We next show that this dimension dependence is attainable. Indeed, we define
\begin{equation*} 
\kappa=\tfrac{(1-\delta)(1-\ell\min\{m,n\})\sqrt{1-\beta_2}}{(1+\delta)\sqrt{m}}, 
\end{equation*}
which quantifies the alignment retained after approximate polar computation and row normalization.
\begin{theorem} \label{theorem:normuon-upper-deterministic}
Suppose that Assumptions~\ref{assumption:smooth} and~\ref{assumption:polar} hold with $L>0$ and let $\ell<\frac{c}{\min\{m,n\}}$ where $c \in (0, 1)$ is a universal constant. Then, there exists some $T>0$ such that Algorithm~\ref{algorithm:normuon} with $\beta_1=0$, $\beta_2 \in [0,1)$, $\alpha \geq 0$, $\eta_t \equiv \eta := \frac{\kappa\epsilon}{0.2L\sqrt{mn}}$ and exact gradients satisfies $\min_{0\leq t<T} \|\nabla F(X_t)\|_\textnormal{nuc} \leq \epsilon$ and the total number of calls to the gradient oracle is bounded by
\begin{equation*}
O\left(\tfrac{m\Delta L}{\epsilon^2}\right), 
\end{equation*}
where $\Delta>0$ is an upper bound for the initial objective function gap $F(X_0)-F^\star$.
\end{theorem}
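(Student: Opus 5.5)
The plan is a one-step descent argument based on the descent inequality \eqref{eq:main-descent-ineq}, with $\beta_1=0$ so that $M_{t+1}=G_t=\nabla F(X_t)$. Write $G=\nabla F(X_t)$, $Z=G/\|G\|_F$, $O=O_{t+1}=\operatorname{Polar}_{\ell,\delta}(Z)$, and $D=D_{t+1}$. Since $\|D\|_F=0.2\sqrt{mn}$, we have $\|D\|_\textnormal{op}\le 0.2\sqrt{mn}$, so
\begin{equation*}
F(X_{t+1})\le F(X_t)-\eta\langle G,D\rangle+\tfrac{L}{2}\eta^2(0.2)^2mn .
\end{equation*}
Everything then rests on an alignment lemma: whenever $\|G\|_\textnormal{nuc}>\epsilon$,
\begin{equation*}
\langle G,D\rangle\ \ge\ \kappa\,0.2\sqrt{mn}\,\|G\|_\textnormal{nuc}.
\end{equation*}
Granting this, the choice $\eta=\kappa\epsilon/(0.2L\sqrt{mn})$ gives a per-step decrease of at least $\kappa\epsilon\|G\|_\textnormal{nuc}/L-\kappa^2\epsilon^2/(2L)\ge \kappa^2\epsilon^2/(2L)$, using $\kappa\le1$. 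Telescoping then forces $\min_t\|\nabla F(X_t)\|_\textnormal{nuc}\le\epsilon$ within $T\le 2\Delta L/(\kappa^2\epsilon^2)+1$ steps. Because $\ell\min\{m,n\}<c<1$ and $\delta,\beta_2$ are fixed constants, $\kappa^{-2}=O(m)$, which yields the $O(m\Delta L/\epsilon^2)$ oracle count.

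The alignment lemma splits into two parts.

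\textbf{Polar part.} Using the SVD $Z=U\Sigma V^\top$ and $O=U\widetilde\Sigma V^\top$, we get $\langle G,O\rangle=\|G\|_F\sum_j\Sigma_{jj}\widetilde\Sigma_{jj}\ge \|G\|_F(1-\delta)\sum_{\Sigma_{jj}\ge\ell}\Sigma_{jj}$. Since $\sum_j\Sigma_{jj}=\|G\|_\textnormal{nuc}/\|G\|_F$ and the small singular values contribute at most $\ell\min\{m,n\}\le \ell\min\{m,n\}\,\|G\|_\textnormal{nuc}/\|G\|_F$ (because $\|G\|_\textnormal{nuc}\ge\|G\|_F$), this gives $\langle G,O\rangle\ge(1-\delta)(1-\ell\min\{m,n\})\|G\|_\textnormal{nuc}$.

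\textbf{Row-scaling part.} Write $\overline O=SO$ with $S=\operatorname{diag}(s_i)$ and $s_i=1/(\sqrt{v_{t+1,i}}+\alpha)>0$. Then $\langle G,D\rangle=0.2\sqrt{mn}\,\langle G,SO\rangle/\|SO\|_F$. The key structural fact to exploit is that exact and approximate polar factors with $\widetilde\Sigma\succeq0$ make every row inner product $\langle G_{i,:},O_{i,:}\rangle=\|G\|_F\,[U\Sigma\widetilde\Sigma U^\top]_{ii}\ge0$. Hence $\langle G,SO\rangle\ge s_{\min}\langle G,O\rangle$. For the denominator, $\|SO\|_F\le s_{\max}\|O\|_F$, and $\|O\|_F\le(1+\delta)\sqrt{\min\{m,n\}}$.

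That crude min/max bound on $S$ would lose an uncontrolled ratio. I would instead use $v_{t+1,i}\ge(1-\beta_2)\|O_{i,:}\|^2/n$, which gives $s_i\|O_{i,:}\|\le\sqrt{n/(1-\beta_2)}$ when $\alpha=0$; for $\alpha>0$ a homogeneity argument should give the same bound. This yields $\|SO\|_F\le\sqrt{mn/(1-\beta_2)}$. The remaining task is a matching lower bound on $\langle G,SO\rangle$, namely showing $\sum_i s_i\langle G_{i,:},O_{i,:}\rangle\gtrsim\sqrt{n}/(1+\delta)\cdot\langle G,O\rangle$ up to the constants in $\kappa$. I expect this to be the main obstacle. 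Each $s_i$ is at least of order $\sqrt n/((1+\delta)\sqrt{\cdot})$, since $v_{t+1,i}\le\max_s\|O_{s,i}\|^2/n\le(1+\delta)^2/n$ because rows of $O$ have norm at most $\|O\|_\textnormal{op}\le1+\delta$. So $s_i\ge\sqrt n/(1+\delta+\alpha\sqrt n)$. Combined with row nonnegativity, this gives $\langle G,SO\rangle\ge\frac{\sqrt n}{1+\delta}\langle G,O\rangle$ when $\alpha=0$. Dividing by $\sqrt{mn/(1-\beta_2)}$ produces exactly the factor $\kappa$.

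The $\alpha>0$ case needs care, because the scale factor $\alpha\sqrt n$ appears in both numerator and denominator. I would handle it by writing $s_i=1/(\sqrt{v_i}+\alpha)$ and bounding the ratio $\langle G,SO\rangle/\|SO\|_F$ directly, using the monotonicity of $x\mapsto x/(x+\alpha)$ applied rowwise. The constant $c$ is used only to keep $1-\ell\min\{m,n\}$ bounded away from zero.
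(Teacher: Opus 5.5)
Your proposal is correct and follows the paper's proof essentially step for step. The shared ingredients are the descent inequality with $\|D_{t+1}\|_\textnormal{op}\le\|D_{t+1}\|_F=0.2\sqrt{mn}$, the polar bound $\langle G,O\rangle\ge(1-\delta)(1-\ell\min\{m,n\})\|G\|_\textnormal{nuc}$, row-wise nonnegativity of $GO^\top$ combined with $v_{t+1,i}\le(1+\delta)^2/n$ for the numerator, and $v_{t+1,i}\ge(1-\beta_2)\|[O_{t+1}]_{i,:}\|_2^2/n$ for the denominator.

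The $\alpha>0$ case, which you left as a sketch, is closed in the paper by the monotonicity idea you point to. The map $x\mapsto x/(\sqrt{(1-\beta_2)/n}\,x+\alpha)$ is increasing and $\|[O_{t+1}]_{i,:}\|_2\le1+\delta$, so $\|\overline{O}_{t+1}\|_F\le(1+\delta)\sqrt{m}/(a+\alpha)$ with $a=(1+\delta)\sqrt{(1-\beta_2)/n}$. Then $(a+\alpha)/(b+\alpha)\ge a/b=\sqrt{1-\beta_2}$ for $b=(1+\delta)/\sqrt{n}\ge a$, which recovers $\kappa$.

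The first term of your per-step decrease should be $\kappa^2\epsilon\|G\|_\textnormal{nuc}/L$, not $\kappa\epsilon\|G\|_\textnormal{nuc}/L$. This still yields your bound $\kappa^2\epsilon^2/(2L)$, without needing $\kappa\le1$.
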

Theorem~\ref{theorem:normuon-upper-deterministic} shows that the upper bound for Algorithm~\ref{algorithm:normuon} matches the lower bound for exact polar computation with $\alpha=0$, and this bound is attained without first-moment momentum ($\beta_1=0$). The same $O(m\Delta L\epsilon^{-2})$ rate continues to hold under approximate polar computation, provided that $\ell < \frac{c}{\min\{m,n\}}$ for a universal constant $c \in (0,1)$.

\subsection{Stochastic setting}
We proceed to the stochastic setting, where Algorithm~\ref{algorithm:normuon} uses stochastic gradients. 
\begin{theorem} \label{theorem:normuon-upper-stochastic}
Suppose that Assumptions~\ref{assumption:smooth},~\ref{assumption:noise} and~\ref{assumption:polar} hold with $L>0$ and let $\ell<\frac{c}{\min\{m,n\}}$ where $c \in (0, 1)$ is a universal constant and $\epsilon \in (0,\sqrt{\Delta L})$. Then, there exists some $T>0$ such that Algorithm~\ref{algorithm:normuon} with $\beta_1=1-\min\{1,\frac{\kappa^2\epsilon^2}{64\sigma^2}\}$, $\beta_2 \in [0,1)$, $\alpha \geq 0$ and $\eta_t \equiv \eta := \frac{\kappa(1-\beta_1)\epsilon}{3.2L\sqrt{mn}}$ satisfies $\EE[\|\nabla F(X_{\widetilde{T}})\|_\textnormal{nuc}] \leq \epsilon$ and the total number of calls to the stochastic gradient oracle is bounded by
\begin{equation*}
O\left(\tfrac{m\Delta L}{\epsilon^2}+\tfrac{m^2\Delta L\sigma^2}{\epsilon^4}\right),
\end{equation*}
where $\Delta>0$ is an upper bound for the initial objective function gap $F(X_0)-F^\star$.
\end{theorem}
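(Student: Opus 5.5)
The proof follows the standard momentum-based normalized-update analysis, with one extra ingredient: a lower bound on the alignment of the row-normalized direction with the momentum, which costs the factor $\kappa$. Write $G_t^\star=\nabla F(X_t)$ and let $E_t = M_{t+1}-\nabla F(X_t)$ be the momentum error.

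\textbf{Step 1 (alignment).} The goal is
\begin{equation*}
\langle M_{t+1}, D_{t+1}\rangle \geq 0.2\sqrt{mn}\,\kappa\,\|M_{t+1}\|_\textnormal{nuc}.
\end{equation*}
To see this, write $Z=M_{t+1}/\|M_{t+1}\|_F$ and $O=\operatorname{Polar}_{\ell,\delta}(Z)=U\widetilde\Sigma V^\top$. Then
\begin{equation*}
\langle Z,O\rangle=\sum_j\Sigma_{jj}\widetilde\Sigma_{jj}\geq(1-\delta)\sum_{\Sigma_{jj}\geq\ell}\Sigma_{jj}\geq(1-\delta)\bigl(\|Z\|_\textnormal{nuc}-\ell\min\{m,n\}\bigr).
\end{equation*}
Since $\|Z\|_\textnormal{nuc}\geq\|Z\|_F=1$, this is at least $(1-\delta)(1-\ell\min\{m,n\})\|Z\|_\textnormal{nuc}$.

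Row normalization multiplies row $i$ by $d_i=1/(\sqrt{v_{t+1,i}}+\alpha)$. The resulting inner product $\sum_i d_i\langle Z_{i,:},O_{i,:}\rangle$ is not obviously nonnegative row by row, so I would instead compare $\overline O$ with a positive diagonal rescaling. The key facts are $v_{t+1,i}\geq (1-\beta_2)\|O_{i,:}\|^2/n$ and $v_{t+1,i}\leq (1+\delta)^2/n$ (rows of $O$ have norm at most $1+\delta$). Hence the scalings satisfy
\begin{equation*}
\frac{\max_i d_i}{\min_i d_i}\lesssim\frac{1+\delta}{\sqrt{1-\beta_2}\,\min_i\|O_{i,:}\|}.
\end{equation*}
A cleaner route is to bound $\langle Z,\overline O\rangle$ row-wise via $\langle Z_{i,:},O_{i,:}\rangle$, using that $O$ and $Z$ share singular vectors. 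Then $ZO^\top=U\Sigma\widetilde\Sigma U^\top$ is PSD, so each diagonal entry $\langle Z_{i,:},O_{i,:}\rangle\geq0$. This gives
\begin{equation*}
\langle Z,\overline O\rangle\geq \min_i d_i\,\langle Z,O\rangle,
\end{equation*}
while $\|\overline O\|_F\leq \max_i d_i\,\|O\|_F$ is controlled using $d_i\|O_{i,:}\|\leq \sqrt{n/(1-\beta_2)}$. Together these give $\|\overline O\|_F\leq\sqrt{mn/(1-\beta_2)}$. For the numerator I use $d_i\geq \sqrt n/(1+\delta)$ when $\alpha=0$; for $\alpha>0$ the common factor cancels after normalization, up to handling $\alpha$ carefully (e.g.\ by a monotonicity argument). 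Dividing yields the factor $\sqrt{1-\beta_2}/((1+\delta)\sqrt m)$, i.e.\ $\kappa$. I expect this step, making the $\alpha$-dependence and the row-wise sign argument rigorous, to be the main obstacle.

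\textbf{Step 2 (descent).} By \eqref{eq:main-descent-ineq}, with $\|D_{t+1}\|_\textnormal{op}\leq\|D_{t+1}\|_F=0.2\sqrt{mn}$,
\begin{equation*}
F(X_{t+1})\leq F(X_t)-\eta\langle M_{t+1},D_{t+1}\rangle+\eta\langle E_t,D_{t+1}\rangle+\tfrac{L}{2}\eta^2(0.2)^2mn.
\end{equation*}
Bound $\langle E_t,D_{t+1}\rangle\leq 0.2\sqrt{mn}\|E_t\|_\textnormal{nuc}$, and use $\|M_{t+1}\|_\textnormal{nuc}\geq\|\nabla F(X_t)\|_\textnormal{nuc}-\|E_t\|_\textnormal{nuc}$. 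Since $\kappa\le1$, this gives a per-step decrease of at least
\begin{equation*}
0.2\sqrt{mn}\,\eta\bigl(\kappa\|\nabla F(X_t)\|_\textnormal{nuc}-2\|E_t\|_\textnormal{nuc}\bigr)-0.02L\eta^2mn.
\end{equation*}

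\textbf{Step 3 (momentum error).} Unroll
\begin{equation*}
E_t=\beta_1E_{t-1}+\beta_1\bigl(\nabla F(X_{t-1})-\nabla F(X_t)\bigr)+(1-\beta_1)\bigl(G_t-\nabla F(X_t)\bigr).
\end{equation*}
Smoothness gives $\|\nabla F(X_{t-1})-\nabla F(X_t)\|_\textnormal{nuc}\leq 0.2L\eta\sqrt{mn}$. The noise terms are martingale differences, but the nuclear norm is not Hilbertian, so I would pass to the Frobenius norm, which costs $\sqrt{\min\{m,n\}}$ at most. Alternatively I would bound $\EE\|\sum_s\beta_1^{t-s}(1-\beta_1)\xi_s\|_\textnormal{nuc}$ crudely. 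Either route should give
\begin{equation*}
\EE\|E_t\|_\textnormal{nuc}\lesssim\beta_1^{t}\|\nabla F(X_0)\|_\textnormal{nuc}+\frac{L\eta\sqrt{mn}}{1-\beta_1}+\sqrt{1-\beta_1}\,\sigma.
\end{equation*}
I expect some dimension factor to be absorbed here, so I would double-check that the claimed $m^2$ survives; the stated $\beta_1$ and $\eta$ suggest the authors use exactly $\sqrt{1-\beta_1}\sigma\leq\kappa\epsilon/8$ directly in nuclear norm.

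\textbf{Step 4 (telescope).} The transient term contributes $\sum_t\beta_1^t\lesssim 1/(1-\beta_1)$ and is absorbed. With $\eta=\kappa(1-\beta_1)\epsilon/(3.2L\sqrt{mn})$, the error and quadratic terms are each at most a constant fraction of $\kappa\epsilon$. Summing over $t$ and dividing by $0.2\sqrt{mn}\,\eta\kappa T$ gives
\begin{equation*}
\frac1T\sum_t\EE\|\nabla F(X_t)\|_\textnormal{nuc}\leq \frac{\Delta}{0.2\sqrt{mn}\,\eta\kappa T}+\frac{\epsilon}{2}.
\end{equation*}
So it suffices to take
\begin{equation*}
T\gtrsim\frac{L\Delta}{\kappa^2(1-\beta_1)\epsilon^2}=\frac{mL\Delta}{\epsilon^2}\max\Bigl\{1,\frac{64\sigma^2}{\kappa^2\epsilon^2}\Bigr\},
\end{equation*}
which with $\kappa^{-2}=O(m)$ (since $\ell\min\{m,n\}<c$) yields $O(m\Delta L\epsilon^{-2}+m^2\Delta L\sigma^2\epsilon^{-4})$. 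The condition $\epsilon<\sqrt{\Delta L}$ is used to absorb the $\|\nabla F(X_0)\|_\textnormal{nuc}$ transient into the $\Delta$ term.
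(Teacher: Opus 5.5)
Your overall architecture (row-wise PSD argument for alignment, descent inequality, momentum-error control, telescoping) matches the paper's. However, Step 3 has a genuine gap, so your argument does not establish the theorem with the prescribed $\beta_1$.

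You assert $\EE\|E_t\|_\textnormal{nuc}\lesssim\cdots+\sqrt{1-\beta_1}\,\sigma$, but neither route you mention delivers it. Let $B_t=(1-\beta_1)\sum_{s\le t}\beta_1^{t-s}(G_s-\nabla F(X_s))$ denote the noise part of your $E_t$, and $d=\min\{m,n\}$. Passing to Frobenius gives only $\EE\|B_t\|_\textnormal{nuc}\le\sqrt{d}\,\EE\|B_t\|_F\le\sqrt{d(1-\beta_1)}\,\sigma$. The triangle inequality gives only $\sigma$.

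No sharper nuclear-norm martingale bound exists in general. For example, i.i.d.\ noise $\pm\sigma e_i^{(m)}(e_i^{(n)})^\top$ with $i$ uniform in $\{1,\ldots,d\}$ gives $\EE\|B_t\|_\textnormal{nuc}\asymp\sigma\sqrt{d(1-\beta_1)}$ once $t\gg 1/(1-\beta_1)\gg d$. So your guess that the authors control the noise ``directly in nuclear norm'' cannot be right.

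In your Step 2 the whole error enters as $2\|E_t\|_\textnormal{nuc}$, with a coefficient not multiplied by $\kappa$. After dividing by $\kappa$, the noise can contribute up to $2\sqrt{d(1-\beta_1)}\,\sigma/\kappa\le\sqrt{d}\,\epsilon/4$ to $\frac1T\sum_t\EE\|\nabla F(X_t)\|_\textnormal{nuc}$, which does not certify $\le\epsilon$. Retuning $\beta_1$ to absorb the $\sqrt d$ would inflate the complexity to $O(dm^2\Delta L\sigma^2\epsilon^{-4})$.

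The missing idea is to keep $B_t$ separate from the drift and never pay for it in nuclear norm.
\begin{itemize}
\item In the inner product, use $|\langle B_t,D_{t+1}\rangle|\le\|B_t\|_F\|D_{t+1}\|_F=0.2\sqrt{mn}\,\|B_t\|_F$. This costs nothing, because the final rescaling fixes $\|D_{t+1}\|_F$ at the same value that bounds $\|D_{t+1}\|_\textnormal{op}$.
\item In the alignment term, use $\kappa\|B_t\|_\textnormal{nuc}\le\kappa\sqrt d\,\|B_t\|_F\le\|B_t\|_F$, which is valid because $\kappa\le 1/\sqrt m\le 1/\sqrt d$. The $\sqrt d$ is absorbed by the very factor that row normalization costs.
\end{itemize}
This yields $\langle\nabla F(X_t),D_{t+1}\rangle\ge0.2\sqrt{mn}\,(\kappa\|\nabla F(X_t)\|_\textnormal{nuc}-2\|E_t\|_\textnormal{nuc}-2\|B_t\|_F)$, with $E_t$ now denoting the drift only. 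Orthogonality of martingale differences in the Frobenius inner product, together with $\|\cdot\|_F\le\|\cdot\|_\textnormal{nuc}$, gives $\EE\|B_t\|_F^2\le(1-\beta_1)\sigma^2$. The noise then contributes $2\sigma\sqrt{1-\beta_1}/\kappa\le\epsilon/4$, as the choice of $\beta_1$ intends.

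Two smaller items need completing.
\begin{itemize}
\item In Step 1, your $\alpha$-free bound $\|\overline O\|_F\le\sqrt{mn/(1-\beta_2)}$ is too weak for $\alpha>0$. Monotonicity of $x\mapsto x/(\sqrt{(1-\beta_2)/n}\,x+\alpha)$ on $[0,1+\delta]$ gives $\|\overline O\|_F\le(1+\delta)\sqrt m/(\sqrt{(1-\beta_2)/n}\,(1+\delta)+\alpha)$. Then $\frac{\sqrt{1-\beta_2}(1+\delta)/\sqrt n+\alpha}{(1+\delta)/\sqrt n+\alpha}\ge\sqrt{1-\beta_2}$ closes the argument.
\item The transient term needs $\|\nabla F(X_0)\|_\textnormal{nuc}\le\sqrt{2\Delta L}$. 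This follows from the descent inequality applied at the point $X_0-\frac1L\|\nabla F(X_0)\|_\textnormal{nuc}\operatorname{Polar}(\nabla F(X_0))$.
\end{itemize}
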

For fixed $\sigma>0$ and sufficiently small $\epsilon>0$, Muon has an upper bound of $O(\min\{m,n\}\Delta L\sigma^2\epsilon^{-4})$ which is shown to be optimal under the same smoothness and noise assumptions~\citep{Zhang-2026-Scale}. Theorem~\ref{theorem:normuon-upper-stochastic} shows a bound that contains an additional dimension-dependent factor of $m\max\{1,m/n\}$ for NorMuon in the stochastic setting. A promising direction is to either improve NorMuon's upper bound or prove that such an improvement is impossible by establishing a lower bound in the stochastic setting.

\subsection{Proof sketches}
\paragraph{Lower bound.} The proof is divided into (i) constructing a function $F(X)$ and (ii) proving it is hard for NorMuon within $T_0$ iterations. Indeed, we define $\lambda = \langle H_\star,D_\star\rangle = \frac{2m-1}{\sqrt{m(m^2+m-1)}}$, where $e_1^{(n)}$ is the first standard basis vector of $\br^n$ and
\begin{equation*}
H_\star=\tfrac{(m,1,\ldots,1)^\top}{\sqrt{m^2+m-1}}(e_1^{(n)})^\top \in \br^{m\times n},\quad D_\star=\tfrac{1}{\sqrt{m}}(1,1,\ldots,1)^\top (e_1^{(n)})^\top \in \br^{m\times n}.
\end{equation*} 
We construct two sequences $\{q_t\}_{t\leq T_0}$ and $\{f_t\}_{t\leq T_0}$, starting from $q_0=f_0=0$. For $0 \leq t<T_0$, we feed the history $\{(X_0+q_sD_\star,f_s,2\epsilon H_\star)\}_{s\leq t}$ to the deterministic step-size rule, which returns $\eta_t \geq 0$. We set $q_{t+1}:=q_t-\eta'_t$ with $\eta'_t:=0.2\sqrt{mn}\eta_t$ and $f_{t+1}:=f_t-\int_0^{\eta'_t}\max\{2\epsilon\lambda-L\min\{z,\eta'_t-z\},0\}dz$. Given $\{q_t\}_{t \leq T_0}$ and $P=H_\star-\lambda D_\star$, we define
\begin{equation*}
\begin{array}{lcl}
f(q) & = & \int_0^q\max_{0\leq j\leq T_0}\max\{2\epsilon\lambda-L|z-q_j|,0\}dz,\\
F(X) & = & f(\langle D_\star,X-X_0\rangle)+\tfrac{L}{2\|P\|_F^2}\Big(\langle P,X-X_0\rangle+\tfrac{2\epsilon\|P\|_F^2}{L}\Big)^2-\tfrac{2\epsilon^2\|P\|_F^2}{L}.
\end{array}
\end{equation*}
We show that Algorithm~\ref{algorithm:normuon} applied to $F$ observes exactly  $\{(X_0+q_sD_\star,f_s,2\epsilon H_\star)\}_{s\leq T_0}$, that is,
\begin{equation*}
X_t=X_0+q_tD_\star,\quad F(X_t)=f_t,\quad \nabla F(X_t)=2\epsilon H_\star,\quad\text{for all } 0\leq t\leq T_0,
\end{equation*}
which implies $\|\nabla F(X_t)\|_\textnormal{nuc}=2\epsilon>\epsilon$ for all $t\leq T_0$. Indeed, we have $F(X_0+q_tD_\star)=f_t$ and $\nabla F(X_0+q_tD_\star)=2\epsilon\lambda D_\star+2\epsilon P=2\epsilon H_\star$ for all $t\leq T_0$. It remains to show that $X_t=X_0+q_tD_\star$, which we prove by induction on $t$. The case $t=0$ is trivial. Suppose that $X_s=X_0+q_sD_\star$ for all $s\leq t$. Then $G_s=2\epsilon H_\star$ for all $s\leq t$, and hence $M_{t+1}$ is a positive multiple of $H_\star$ and $O_{t+1}=H_\star$. Since every row of $H_\star$ is a positive multiple of $e_1^{(n)}$, row normalization equalizes the rows and gives $D_{t+1}=0.2\sqrt{mn}D_\star$ for any fixed $\beta_1,\beta_2 \in [0,1)$. Therefore, the step-size rule observes $\{(X_0+q_sD_\star,f_s,2\epsilon H_\star)\}_{s\leq t}$ and returns the same $\eta_t$, which gives $X_{t+1}=X_0+q_{t+1}D_\star$.

Since $F(X_0)-F^\star \leq \frac{2\epsilon^2}{L}+\frac{4T_0\epsilon^2\lambda^2}{L}$, $\lambda^2<\frac{4}{m}$, and $\epsilon<\frac14\sqrt{\Delta L}$, we have $F(X_0)-F^\star\leq\Delta$ whenever $T_0\leq\frac{m\Delta L}{32\epsilon^2}$. Hence, our construction allows $T_0=\Omega(m\Delta L\epsilon^{-2})$.

\paragraph{Upper bound.} The key is to apply the descent inequality with $X=X_t$ and $Y=X_{t+1}$ as follows: 
\begin{equation} \label{eq:main-text-descent}
F(X_{t+1})-F(X_t) \leq -\eta_t\langle\nabla F(X_t),D_{t+1}\rangle+\tfrac{L\eta_t^2}{2}\|D_{t+1}\|_\textnormal{op}^2.
\end{equation}
By definition, $\|D_{t+1}\|_\textnormal{op} \leq 0.2\sqrt{mn}$. It suffices to bound $\langle\nabla F(X_t),D_{t+1}\rangle$ from below, and this is where row normalization matters. Lemma~\ref{lem:direction} shows that
\begin{equation} \label{eq:main-text-alignment}
\langle M_{t+1},D_{t+1}\rangle\geq 0.2\kappa\sqrt{mn}\|M_{t+1}\|_\textnormal{nuc}
\end{equation}
for any $\alpha \geq 0$ under approximate polar computation, where $\kappa=\Theta(m^{-1/2})$.

In the deterministic setting with $\beta_1=0$, we have $M_{t+1}=\nabla F(X_t)$. With $\eta$ defined in Theorem~\ref{theorem:normuon-upper-deterministic}, Eq.~\eqref{eq:main-text-descent} becomes $F(X_{t+1})-F(X_t)\leq-\tfrac{\kappa^2\epsilon}{L}\|\nabla F(X_t)\|_\textnormal{nuc}+\frac{\kappa^2\epsilon^2}{2L}$, which guarantees a decrease of at least $\frac{\kappa^2\epsilon^2}{2L}$ whenever $\|\nabla F(X_t)\|_\textnormal{nuc}>\epsilon$. Telescoping Eq.~\eqref{eq:main-text-descent} yields the desired result. 

In the stochastic setting, we decompose $M_{t+1}=\nabla F(X_t)+E_t+B_t$ using a drift term $E_t$ and a noise term $B_t$. Using Eq.~\eqref{eq:main-text-alignment} and the definition of $\kappa$, $\langle\nabla F(X_t),D_{t+1}\rangle\geq 0.2\sqrt{mn}(\kappa\|\nabla F(X_t)\|_\textnormal{nuc}-2\|E_t\|_\textnormal{nuc}-2\|B_t\|_F)$. Here, $\|E_t\|_\textnormal{nuc}$ is controlled by Assumption~\ref{assumption:smooth}, and $\|B_t\|_F$ is controlled by the martingale structure of the gradient noise and Assumption~\ref{assumption:noise}. Choosing $\beta_1$ and $\eta$ and telescoping Eq.~\eqref{eq:main-text-descent} yields the desired result.

\section{Experiments}\label{sec:exp}
We compare Muon and NorMuon on synthetic least-squares problems motivated by Theorem~\ref{theorem:normuon-lower}, image classification with a convolutional neural network (CNN), and LLM pretraining. We run the synthetic experiments on an AMD Ryzen 7 8845HS CPU, and the CNN and LLM experiments on 1 and 8 NVIDIA H200 GPUs, respectively. In all neural network experiments, we use the practical variant of NorMuon~\citep{Li-2026-Normuon}, which replaces the update direction $D_{t+1}=\frac{0.2\sqrt{mn}}{\|\overline{O}_{t+1}\|_F}\overline{O}_{t+1}$ in Algorithm~\ref{algorithm:normuon} with
\begin{equation} \label{eq:normuon-practical}
D_{t+1}=\tfrac{\|O_{t+1}\|_F}{\|\overline{O}_{t+1}\|_F}\overline{O}_{t+1},\quad X_{t+1}\gets X_t-\eta_tD_{t+1},
\end{equation}
so that the update has the same Frobenius norm as the Muon update, which simplifies parameter tuning. Since this change only rescales the step size, our lower bound still applies, and our upper bounds hold once the rescaling is absorbed into the step size.

\subsection{Synthetic experiment}\label{sec:exp-synthetic}
The hard function in Theorem~\ref{theorem:normuon-lower} depends on the step-size rule, and hence it cannot serve as a fixed benchmark for candidate methods. We instead minimize the following least-squares objective with imbalanced rows: 
\begin{equation*}
F(X) = \tfrac{1}{2}\|X-\|C\|_{F}^{-1}C\|_F^2, \textnormal{ where } C \in \br^{m\times n},\ [C]_{i,:}=\tfrac{1}{i}u_i^\top,
\end{equation*}
and $u_1,\ldots,u_m$ are drawn independently and uniformly from the unit sphere in $\mathbb{R}^n$. We set $n=8$ and $m\in\{8,32,128,512,2048\}$. We run Muon, NorMuon (Algorithm~\ref{algorithm:normuon}), and NorMuon with Eq.~\eqref{eq:normuon-practical} from $X_0=M_0=v_0=0$ with $\alpha=0$ for $T=1000$ iterations, and tune a constant step size and the momentum parameters for each method and each $m$. See Appendix~\ref{app:exp} for more details.

We consider the reduction $\log_{10}(F(X_0))-\log_{10}(\min_{0\leq t\leq T} F(X_t))$, where a larger value is better. We tune the hyperparameters on 50 random seeds and report the results over 100 other random seeds. We compare Muon and NorMuon on the hard instances and report the results in Table~\ref{tab:synthetic-iterations}. As $m$ grows from $8$ to $2048$, the loss reduction of NorMuon drops, while that of Muon remains nearly unchanged (see Appendix~\ref{app:exp} for the loss curves). The two versions of NorMuon perform almost identically. Since NorMuon with Eq.~\eqref{eq:normuon-practical} has the same update norm as Muon, the gap can be attributed to row normalization.
\begin{table}[!ht]
\centering
\caption{Mean $\pm$ twice the standard error of the loss reduction $(\log_{10}{F(X_0)}-\log_{10}{\min_{0\leq t\leq T} F(X_t)})$ over 100 random seeds. A larger value means better performance.}
\label{tab:synthetic-iterations}
\begin{tabular}{lrrrrr}
\toprule
$m$ & $8$ & $32$ & $128$ & $512$ & $2048$ \\
\midrule
Muon & $5.70_{\pm 0.03}$ & $5.70_{\pm 0.02}$ & $5.70_{\pm 0.02}$ & $5.72_{\pm 0.03}$ & $5.72_{\pm 0.03}$ \\
NorMuon (Algorithm~\ref{algorithm:normuon}) & $5.70_{\pm 0.03}$ & $5.46_{\pm 0.02}$ & $5.00_{\pm 0.02}$ & $4.12_{\pm 0.01}$ & $3.15_{\pm 0.01}$ \\
NorMuon with Eq.~\eqref{eq:normuon-practical} & $5.70_{\pm 0.03}$ & $5.46_{\pm 0.02}$ & $5.00_{\pm 0.01}$ & $4.13_{\pm 0.01}$ & $3.15_{\pm 0.01}$ \\
\bottomrule
\end{tabular}
\end{table}

\subsection{Image classification}\label{sec:exp-cifar}
\paragraph{Training.} We train CIFARNET~\citep{Jordan-2024-94}, a CNN with 2M parameters, on CIFAR-10~\citep{Krizhevsky-2009-Learning}. CIFARNET has $3$ groups of convolutional layers with widths $64$, $256$, and $256$. Each group applies a $3 \times 3$ convolution, max pooling, BatchNorm, GELU, a second $3 \times 3$ convolution, BatchNorm, and GELU. Flattening each convolution into a matrix in $\br^{d_\textnormal{out} \times 9d_\textnormal{in}}$, where $d_\textnormal{in}$ and $d_\textnormal{out}$ are its numbers of input and output channels, gives six matrices: one each of sizes $64 \times 216$, $64 \times 576$, and $256 \times 576$, and three of size $256 \times 2304$. We update these matrix-valued parameters with AdamW~\citep{Kingma-2015-Adam, Loshchilov-2019-Decoupled}, Muon, or NorMuon with row or column normalization, and all other parameters with the same SGD optimizer, following~\citet{Jordan-2024-Muon}. We train for 50 epochs with a batch size of 512. See Appendix~\ref{app:exp} for tuning details.

\paragraph{Evaluation.} We select hyperparameters using the mean final validation accuracy over 20 seeds on a fixed 45,000/5,000 training/validation split of the training set. We retrain each selected configuration on all 50,000 training images with 50 new seeds and evaluate it on the test set.

\paragraph{Results.} We compare AdamW, Muon, and both NorMuon variants in Table~\ref{tab:cifarnet}. All three Muon-family optimizers outperform AdamW. NorMuon with column normalization, which normalizes along the larger dimension of every weight matrix, attains the highest mean test accuracy by more than a standard error, while Muon attains the lowest mean test loss.
\begin{table}[!ht]
\centering
\caption{Mean $\pm$ standard error of the test loss and accuracy over 50 seeds for CIFARNET.}
\label{tab:cifarnet}
\begin{tabular}{lcccc}
\toprule
& AdamW & Muon & NorMuon (column) & NorMuon (row) \\ \midrule
Test loss & $0.4298_{\pm 0.0005}$ & $\mathbf{0.3884}_{\pm 0.0004}$ & $0.3889_{ \pm 0.0005}$ & $0.3900_{ \pm 0.0004}$ \\
Test accuracy & $92.33_{\pm 0.02}\%$ & $93.79_{\pm 0.03}\%$ & $\mathbf{93.87}_{\pm 0.02}\%$ & $93.83_{\pm 0.02}\%$ \\
\bottomrule
\end{tabular}
\end{table}

\subsection{LLM pretraining}\label{sec:exp-llm}
\paragraph{Training.} We pretrain decoder-only transformers with $286$M, $1.38$B, and $6.44$B total parameters using the nanochat codebase~\citep{Karpathy-2025-Nanochat} on the Nemotron-CLIMB dataset~\citep{Diao-2025-Nemotron} for $2.20$B, $14.6$B, and $19.7$B tokens, respectively. Following the practice for Muon~\citep{Jordan-2024-Muon}, Muon or NorMuon trains only the transformer matrices, which have about $84.9$M, $679$M, and $1.61$B parameters, and AdamW trains all other parameters. The token budget of the $6.44$B model follows the compute-optimal setting of nanochat, which allocates about $10.5$ tokens per parameter in the transformer matrices and the language model head. The $286$M and $1.38$B models use $20$ tokens per parameter in the transformer matrices and the language model head, which exceeds this setting. See Appendix~\ref{app:exp} for more details. 

We compare NorMuon with Muon and include AdamW with default hyperparameters, i.e., step size $0.001$ and momentum factors $(0.9,0.999)$, as a reference for the $1.38$B and $6.44$B models. As in~\citet{Karpathy-2025-Nanochat}, the query, key, value, and output projections are separate square matrices. NorMuon normalizes these square matrices along their rows and the MLP matrices along their larger dimension. The decoupled weight decay is $0.02$ throughout. For Muon and NorMuon, the step size is set to $\eta\sqrt{2^{-19}B}$~\citep{Krizhevsky-2014-One}, where $\eta$ is the base step size and $B$ is the batch size in tokens. We select $\eta=0.03$ and $\eta=0.01$ by a grid search on the $286$M and $1.38$B models, respectively (Section~\ref{sec:exp-ablation}) and reuse $\eta=0.01$ for the $6.44$B model. Both optimizers use Nesterov-type momentum~\citep{Dozat-2016-Incorporating} with $\beta_1=0.95$ and $5$ PolarExpress iterations~\citep{Amsel-2026-Polar}, and NorMuon uses $\beta_2=0.95$. We perform a grid search to select the best step size for AdamW on the $286$M model from $\{3\times 10^{-4}, 10^{-3}, 3\times 10^{-3}\}$ and pick $10^{-3}$.

\paragraph{Evaluation.} We hold out $41.94$M tokens to compute the validation loss and $167.77$M tokens to compute the test loss of the final model. Following OLMES~\citep{Gu-2025-OLMES}, we report accuracy on $10$ knowledge and reasoning benchmarks: MMLU~\citep{Hendrycks-2021-Measuring}, HellaSwag (HSwag)~\citep{Zellers-2019-Hellaswag}, PIQA~\citep{Bisk-2020-Piqa}, WinoGrande~\citep{Sakaguchi-2021-Winogrande}, ARC-Challenge (ARC-C) and ARC-Easy (ARC-E)~\citep{Clark-2018-Think}, BoolQ~\citep{Clark-2019-BoolQ}, CommonsenseQA (CSQA)~\citep{Talmor-2019-CommonsenseQA}, Social IQa (SIQA)~\citep{Sap-2019-Social}, and OpenBookQA (OBQA)~\citep{Mihaylov-2018-Can}.

\paragraph{Results.} We show that NorMuon achieves lower validation loss than Muon for most of training at all $3$ model sizes in Figure~\ref{fig:llm_results}. We show that NorMuon improves upon Muon's test loss and average downstream accuracy for the $1.38$B and $6.44$B models in Table~\ref{tab:llm_results}. For the $1.38$B model, we run Muon and NorMuon with $3$ random seeds and report the mean. Standard deviations are reported in Appendix~\ref{app:exp}. All other runs use a single seed. Both Muon and NorMuon outperform AdamW. The $6.44$B model uses a different tokenizer, and its loss is not directly comparable to those of the other models.
\begin{table}[ht!]
\caption{LLM pretraining downstream accuracies and test losses.}
\label{tab:llm_results}
\centering
\setlength{\tabcolsep}{1pt}
\begin{tabular}{@{}ll*{12}{c}@{}}
\toprule
& & MMLU & HSwag & PIQA & Wino. & ARC-C & ARC-E
& BoolQ & CSQA & SIQA & OBQA & Avg & Loss \\
\midrule
\multirow{3}{*}{\rotatebox{90}{$1.38$B}}
& AdamW & $31.9$ & $54.9$ & $74.6$ & $54.2$ & $45.1$ & $72.3$ & $57.9$ & $56.6$ & $46.2$ & $48.2$ & $54.2$ & $2.3553$ \\
& Muon & $\mathbf{33.3}$ & $60.3$ & $75.9$ & $55.5$ & $43.8$ & $73.6$ & $\mathbf{64.4}$ & $\mathbf{60.0}$ & $49.1$ & $47.1$ & $56.3$ & $2.2914$ \\
& NorMuon & $\mathbf{33.3}$ & $\mathbf{60.9}$ & $\mathbf{76.1}$ & $\mathbf{58.4}$ & $\mathbf{45.4}$ & $\mathbf{75.0}$ & $62.0$ & $59.6$ & $\mathbf{50.2}$ & $\mathbf{49.8}$ & $\mathbf{57.1}$ & $\mathbf{2.2810}$ \\
\midrule
\multirow{3}{*}{\rotatebox{90}{$6.44$B}}
& AdamW & $34.2$ & $60.8$ & $76.4$ & $57.2$ & $43.5$ & $73.1$ & $66.9$ & $60.2$ & $48.8$ & $46.4$ & $56.8$ & $2.3748$ \\
& Muon & $36.2$ & $67.3$ & $\mathbf{78.5}$ & $\mathbf{61.4}$ & $\mathbf{48.7}$ & $\mathbf{79.6}$ & $65.7$ & $63.1$ & $\mathbf{53.0}$ & $50.4$ & $60.4$ & $2.2803$ \\
& NorMuon
& $\mathbf{36.4}$ & $\mathbf{68.2}$ & $78.0$ & $60.5$ & $47.9$ & $76.5$ & $\mathbf{72.8}$ & $\mathbf{64.5}$ & $52.5$ & $\mathbf{51.6}$ & $\mathbf{60.9}$ & $\mathbf{2.2674}$ \\
\bottomrule
\end{tabular}
\end{table}
\begin{figure}[!ht]
\centering
\includegraphics[width=\linewidth]{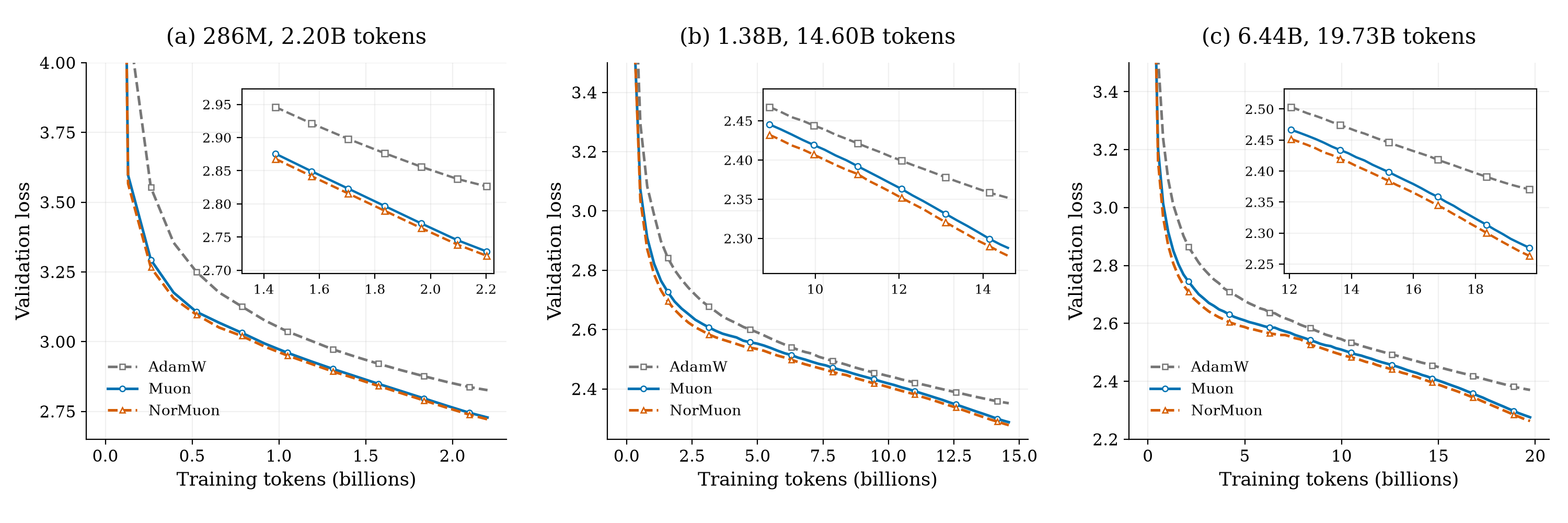}
\caption{Validation loss in LLM pretraining for the (a) $286$M, (b) $1.38$B, and (c) $6.44$B models.} \label{fig:llm_results}
\end{figure}

\subsection{Ablation studies}\label{sec:exp-ablation}
\paragraph{Step-size sensitivity.} We test how sensitive Muon and NorMuon are to step sizes. We train the $286$M model on $2.20$B tokens with step sizes in $\{0.01,0.02,0.03,0.04\}$ and the $1.38$B model on $14.6$B tokens with step sizes in $\{0.005,0.01,0.02\}$, and compare NorMuon with Muon at each step size. Figure~\ref{fig:ablation}(a) shows the final validation loss. NorMuon has lower loss than Muon at every step size, and both reach their lowest loss at $0.03$ for the $286$M model and $0.01$ for the $1.38$B model. 
\begin{figure}[!ht]
\centering
\includegraphics[width=\linewidth]{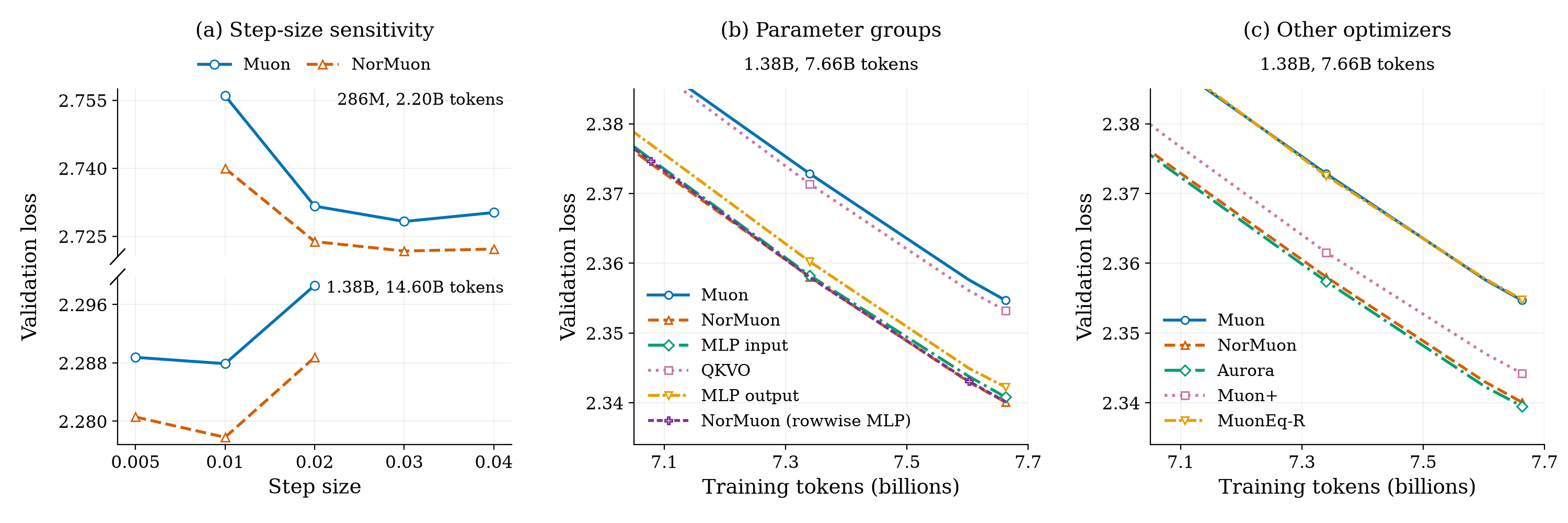}
\caption{Ablation studies. (a) Final validation loss versus step size. (b) NorMuon applied to different groups of matrices. (c) Comparison with other Muon variants that use normalization.} \label{fig:ablation}
\end{figure}

\noindent In what follows, we instead train the $1.38$B model in the compute-optimal setting of $7.66$B tokens.
\paragraph{Parameter groups.} We test which weight matrices benefit from normalization. We apply NorMuon only to the attention query/key/value/output projections (QKVO), the MLP input projections, or the MLP output projections, with Muon optimizing the remaining matrices. We compare these variants with Muon, NorMuon on all transformer matrices, and NorMuon (row-wise MLP), which normalizes the MLP matrices along their rows instead of their larger dimension. As shown in Figure~\ref{fig:ablation}(b) and the left half of Table~\ref{tab:ablation_test}, normalizing only the QKVO matrices gives nearly the same validation and test losses as Muon, while normalizing only the MLP input or only the MLP output projections recovers most of the gain of NorMuon, and NorMuon (row-wise MLP) nearly matches NorMuon. Thus, the gain of NorMuon comes mainly from the MLP matrices. This aligns with the findings of~\citet{Dewulf-2026-Aurora}. 
 
\paragraph{Other matrix optimizers with normalization.} We compare NorMuon with other Muon variants that use normalization. The baselines include Muon, Aurora~\citep{Dewulf-2026-Aurora}, Muon+~\citep{Zhang-2026-Muon}, and MuonEq-R~\citep{Chang-2026-Muoneq}. As shown in Figure~\ref{fig:ablation}(c) and the right half of Table~\ref{tab:ablation_test}, Aurora and NorMuon achieve nearly identical validation and test losses, with Aurora's losses being slightly lower, followed by Muon+, while MuonEq-R performs nearly identically to Muon. With the same step size and 5 PolarExpress iterations, Aurora, NorMuon, and Muon+, which all normalize the orthogonalized update, improve on Muon.
\begin{table}[!ht]
\centering
\caption{Test loss for parameter group and optimizer ablations.}
\label{tab:ablation_test}
\begin{tabular}{@{}lrr@{\hspace{2em}}lrr@{}} \toprule
NorMuon parameter groups & $286$M & $1.38$B & Other optimizers & $286$M & $1.38$B \\ \midrule
Muon & $2.7311$ & $2.3587$ & Muon & $2.7311$ & $2.3587$ \\
NorMuon & $\mathbf{2.7248}$ & $\mathbf{2.3442}$ & NorMuon & $2.7248$ & $2.3442$ \\
QKVO only & $2.7319$ & $2.3572$ & Aurora & $\mathbf{2.7237}$ & $\mathbf{2.3435}$ \\
MLP input only & $2.7258$ & $2.3448$ & Muon+ & $2.7265$ & $2.3483$ \\
MLP output only & $2.7260$ & $2.3463$ & MuonEq-R & $2.7305$ & $2.3587$ \\
NorMuon (row-wise MLP) & $2.7260$ & $\mathbf{2.3442}$ & & & \\ \bottomrule
\end{tabular}
\end{table}

\section{Conclusion}\label{sec:conclusion}
We studied the effect of row normalization on Muon through the worst-case convergence of NorMuon in operator-norm geometry. Indeed, we established matching deterministic lower and upper bounds of $\Theta(mL\epsilon^{-2})$, revealing an extra factor of $m$ compared with Muon. The lower bound holds with exact polar computation, any fixed momentum parameters and any deterministic adaptive step sizes. We also extended the upper bound analysis to stochastic settings, allowing approximate polar computation in both upper bound analyses. Synthetic experiments show a slowdown for NorMuon, whereas LLM pretraining experiments demonstrate improved performance over Muon. These findings highlight that balanced neuron-wise updates alone do not guarantee faster optimization in the worst case. Future directions include identifying structural properties of LLM training that explain when row normalization is beneficial and developing a theory that captures these benefits.

\section*{Acknowledgments}
We sincerely appreciate Buzz High Performance Computing (\hyperlink{https://www.buzzhpc.ai}{\texttt{https://www.buzzhpc.ai}}, \texttt{info@buzzhpc.ai}) for providing computational resources and support for this work. The second author is partially supported by a start-up fund and the Early Career Scholarship Support Grant at Columbia University.

\bibliographystyle{plainnat}
\bibliography{ref}

\newpage \appendix
\section{Further Related Works} \label{app:further-related-work}
We make some comments on other topics, including more discussions on matrix-aware optimization methods, the theoretical analysis of matrix-aware optimization methods, lower bound analysis, and other optimizers and analysis. For an overview of neural network optimization methods and analysis, we refer to the monographs~\citep{Nesterov-2018-Lectures, Zhang-2023-Dive}.

\paragraph{More discussion on matrix-aware optimization methods.} Earlier matrix optimizers exploit the structure of weight matrices through Kronecker-factored or second-moment preconditioners~\citep{Martens-2015-Optimizing, Gupta-2018-Shampoo, Vyas-2025-SOAP}. Muon~\citep{Jordan-2024-Muon} and Scion~\citep{Pethick-2025-Training} depart from preconditioning and instead construct updates by dualizing the gradient under the operator norm. Several other Muon variants modify how the update is scaled or preconditioned. AdaMuon~\citep{Si-2025-Adamuon} applies element-wise second-moment normalization to the orthogonalized update. NAMO~\citep{Zhang-2026-Adam} scales the orthogonalized momentum by a single adaptive step size, while its diagonal extension NAMO-D applies column-wise adaptive scaling. Muon$^2$~\citep{Liu-2026-Muon} instead preconditions the momentum before orthogonalization, which improves the conditioning of the input to the polar computation. Muown~\citep{Lion-2026-Muown, Hubler-2026-Muown} controls row magnitudes through reparameterization. Stateless normalization has also been studied: SWAN~\citep{Ma-2025-Swan} applies row normalization followed by orthogonalization, and SinkGD~\citep{Scetbon-2025-Gradient} replaces the orthogonalization with alternating row and column normalization. Other extensions target distributed efficiency~\citep{Ahn-2025-Dion, Amsel-2026-Dion3}, step-size scaling based on polar decomposition~\citep{Lau-2025-Polargrad}, and more accurate polar approximation~\citep{Amsel-2026-Polar}.

\paragraph{Analysis of matrix-aware optimization methods.} A conceptual basis for Muon is that the gradient is a dual vector and should be mapped back to the primal space before being used as an update. \citet{Bernstein-2024-Old} identify the orthogonalized update as steepest descent under the spectral norm, and modular dualization~\citep{Bernstein-2025-Modular, Large-2024-Scalable} extends this view by assigning each layer an operator norm according to its input-output structure; see also~\citep{Pethick-2025-Training, Chen-2026-Muon}. Within this operator-norm geometry, exact polar analyses establish a dimension-free $O(\Delta L\epsilon^{-2})$ rate for Muon~\citep{Shen-2026-Convergence}, and Muon with weight decay converges as a stochastic Frank-Wolfe method~\citep{Sfyraki-2026-Lions}. \citet{Kim-2026-Convergence} prove that finite Newton-Schulz steps only introduce a multiplicative factor that approaches one doubly exponentially, and \citet{Choudhury-2026-Muon} further handle Nesterov momentum and heavy-tailed noise. Other works study implicit and simplicity biases, local quadratic models, structured problems, and nonsmooth settings~\citep{Fan-2026-Implicit, Dragutinovic-2026-To, Davis-2025-Spectral, Ma-2026-Preconditioning, Gonon-2026-Insights, Parshakova-2026-Muon}.

\paragraph{Lower bound analysis.} Classical first-order complexity analysis applies to every gradient-based optimization method and therefore gives no specific treatment to individual update rules \citep{Nemirovski-1983-Problem,Carmon-2020-Lower,Carmon-2021-Lower,Arjevani-2023-Lower}, leaving open the best rate that a particular algorithm can achieve. Algorithm-dependent convergence analysis instead analyzes a fixed update rule and seeks the best convergence result attainable by tuning it. Additional examples include tight stationary-point lower bounds for SGD \citep{Drori-2020-Complexity}, explicit dimension-dependent lower bounds for cyclic coordinate descent relative to randomized coordinate descent \citep{Sun-2021-Worst}, lower bounds distinguishing random-shuffling and incremental-gradient schemes \citep{Safran-2020-Good}, and exact worst-case constructions for gradient descent with exact line search~\cite{De-2017-Worst}. More recently, related lower-bound analyses have been developed for sign-based methods~\citep{Tao-2026-And}.

\paragraph{Other optimizers and analysis.}  Classical optimizers treat weight matrices as vectors, from SGD with momentum~\citep{Sutskever-2013-Importance} to adaptive methods that rescale each coordinate by accumulated or exponentially averaged squared gradients~\citep{Duchi-2011-Adaptive, Tieleman-2012-Lecture, Kingma-2015-Adam, Loshchilov-2019-Decoupled}. Other works rescale updates at a coarser granularity to exploit parameter structure~\citep{You-2020-Large}. Adafactor~\citep{Shazeer-2018-Adafactor} factorizes the second moment of a weight matrix into row and column statistics to save memory. Adam-mini~\citep{Zhang-2025-Adam} uses a single second-moment estimate for each Hessian-informed block. NorMuon adopts this normalization, but applies it to the row-wise norms of the orthogonalized update rather than to gradient coordinates.

Our analysis is also motivated by prior works that study adaptive gradient methods. \citet{Wilson-2017-Marginal} construct a linear classification problem on which adaptive methods converge to poorly generalizing solutions, whereas SGD finds the minimum-norm solution. \citet{Reddi-2018-Convergence} construct simple convex problems on which Adam fails to converge. \citet{Zhang-2022-Adam} show that Adam converges when $\beta_2$ is chosen large enough after the problem is fixed. In contrast, we show that the worst-case cost of row normalization in NorMuon is not non-convergence but an extra dimension-dependent factor in the iteration complexity, and this cost persists under any fixed momentum parameters.

\section{Missing Proofs} \label{app:proofs}
Throughout this section, we let $d=\min\{m,n\}$ and $e_i^{(n)} \in \br^n$ be the $i^\textnormal{th}$ basis vector of $\br^n$. 
\subsection{Proof of Theorem~\ref{theorem:normuon-lower}}
We define $\lambda=\langle H_\star,D_\star\rangle=\tfrac{2m-1}{\sqrt{m(m^2+m-1)}}$, where
\begin{equation}
\label{eq:def-H-D}
H_\star:=\tfrac{(m,1,\ldots,1)^\top}{\sqrt{m^2+m-1}}(e_1^{(n)})^\top \in \br^{m\times n},\quad D_\star:=\tfrac{1}{\sqrt{m}}(1,1,\ldots,1)^\top (e_1^{(n)})^\top \in \br^{m\times n}.
\end{equation}
We construct two sequences $\{q_t\}_{t=0}^{T_0}$ and $\{f_t\}_{t=0}^{T_0}$ by setting $q_0=0$ and $f_0=0$. For $0\leq t<T_0$, we feed the history $\{(X_0+q_sD_\star, f_s, 2\epsilon H_\star)\}_{s\leq t}$ to the deterministic adaptive step-size rule, which returns a fixed $\eta_t \geq 0$. We let $\eta'_t=0.2\sqrt{mn}\eta_t$ and define
\begin{equation}\label{eq:lower-transcript}
q_{t+1}=q_t-\eta'_t,\quad f_{t+1}=f_t-\int_0^{\eta'_t} \max\{2\epsilon\lambda-L\min\{z,\eta'_t-z\},0\}dz.
\end{equation}
We also define 
\begin{align}
f(q) & = \ \int_0^q \max_{0 \leq j \leq T_0}\max\{2\epsilon\lambda-L|z-q_j|,0\}dz, \label{eq:lower-interpolant} \\
F(X) & = \ f(\langle D_\star,X-X_0\rangle)+\tfrac{L}{2\|H_\star-\lambda D_\star\|_F^2}(\langle H_\star-\lambda D_\star,X-X_0\rangle+\tfrac{2\epsilon\|H_\star-\lambda D_\star\|_F^2}{L})^2\notag\\
&\quad -\tfrac{2\epsilon^2\|H_\star-\lambda D_\star\|_F^2}{L}.\label{eq:lower-objective}
\end{align}
We claim that, when applied to $F$ with any fixed $\beta_1,\beta_2\in[0,1)$, exact gradients and $\ell=\delta=\alpha=0$, Algorithm~\ref{algorithm:normuon} satisfies
\begin{equation} \label{eq:lower-bound-claim}
X_t=X_0+q_tD_\star,\quad F(X_t)=f(q_t)=f_t,\quad \nabla F(X_t)=2\epsilon H_\star,\quad \textnormal{for all } t=0, 1, \ldots, T_0.
\end{equation}
In other words, Algorithm~\ref{algorithm:normuon} observes $\{(X_0+q_sD_\star, f_s, 2\epsilon H_\star)\}_{s\leq T_0}$ and selects the same $\{\eta_s\}_{s<T_0}$. Since $\|2\epsilon H_\star\|_\textnormal{nuc}=2\epsilon>\epsilon$, we obtain the desired result in Theorem~\ref{theorem:normuon-lower}. Thus, it suffices to show that (i) Eq.~\eqref{eq:lower-bound-claim} holds; (ii) $F$ satisfies Assumption~\ref{assumption:smooth}; and (iii) $F(X_0)-F^\star \leq \Delta$.

We prove (i). The sequence $\{q_t\}_{t \leq T_0}$ is nonincreasing, which implies that the distance from any $q \in [q_{t+1},q_t]$ to $\{q_0,\ldots,q_{T_0}\}$ is $\min\{q-q_{t+1},q_t-q\}$. Therefore, we have
\begin{equation*}
f(q_t)-f(q_{t+1}) = \int_0^{\eta'_t} \max\{2\epsilon\lambda-L\min\{z,\eta'_t-z\},0\}dz = f_t-f_{t+1}. 
\end{equation*}
Since $f(q_0)=f_0=0$, we have $f(q_t)=f_t$ for all $0 \leq t \leq T_0$. By definition, we have
\begin{equation*}
\nabla F(X)=f'(\langle D_\star,X-X_0\rangle)D_\star + (\tfrac{L\langle H_\star-\lambda D_\star,X-X_0\rangle}{\|H_\star-\lambda D_\star\|_F^2}+2\epsilon)(H_\star-\lambda D_\star). 
\end{equation*}
Since $j=t$ attains the maximum in Eq.~\eqref{eq:lower-interpolant}, we have $f'(q_t)=2\epsilon\lambda$. Putting these pieces together with $\langle H_\star-\lambda D_\star,D_\star\rangle=0$ yields that $F(X_0+q_tD_\star)=f_t$ and $\nabla F(X_0+q_tD_\star)=2\epsilon H_\star$ for all $0\leq t\leq T_0$.

It remains to show $X_t=X_0+q_tD_\star$ for all $t \leq T_0$. We prove this by induction on $t$. The case of $t=0$ follows from $q_0=0$. Suppose that $X_s=X_0+q_sD_\star$ for all $s\leq t<T_0$. Then, we have $G_s=\nabla F(X_s)=2\epsilon H_\star$ and $F(X_s)=f_s$ for $s\leq t$. Thus, the deterministic adaptive step-size rule observes $\{(X_0+q_\tau D_\star,f_\tau,2\epsilon H_\star)\}_{\tau\leq t}$ and selects $\eta_t$. Since $\operatorname{rank}(H_\star)=\|H_\star\|_F=1$, Algorithm~\ref{algorithm:normuon} gives $M_{s+1}=2\epsilon(1-\beta_1^{s+1})H_\star$ and $O_{s+1}=\operatorname{Polar}(H_\star)=H_\star$ for all $s \leq t$. By the definition of $H_\star$ (see Eq.~\eqref{eq:def-H-D}), we have
\begin{equation} \label{eq:lower-bound-v}
v_{t+1} = \sum_{s=0}^t \beta_2^{t-s}\tfrac{1-\beta_2}{n}\operatorname{diag}(O_{s+1}O_{s+1}^\top) = \tfrac{1-\beta_2^{t+1}}{n(m^2+m-1)}(m^2,1,\ldots,1)^\top.
\end{equation}
Since $\alpha=0$, we have
\begin{equation*}
[\overline{O}_{t+1}]_{i,:} = \tfrac{1}{\sqrt{[v_{t+1}]_i}} [O_{t+1}]_{i,:} = \tfrac{1}{\sqrt{[v_{t+1}]_i}} [H_\star]_{i,:} = \begin{cases} \tfrac{m}{\sqrt{(m^2+m-1) [v_{t+1}]_1}} e_1^{(n)}, & i = 1, \\ \tfrac{1}{\sqrt{(m^2+m-1) [v_{t+1}]_i}} e_1^{(n)}, & i = 2,\ldots,m. \end{cases}
\end{equation*}
Together with Eq.~\eqref{eq:lower-bound-v}, this implies that $[\overline{O}_{t+1}]_{i,:} = \sqrt{\tfrac{n}{1-\beta_2^{t+1}}}e_1^{(n)}$ for all $i=1,\ldots, m$. In addition, by the definition of $D_\star$ (see Eq.~\eqref{eq:def-H-D}), we have
\begin{equation*}
\overline{O}_{t+1}=\sqrt{\tfrac{mn}{1-\beta_2^{t+1}}}D_\star, 
\end{equation*}
which further implies 
\begin{equation*}
\|\overline{O}_{t+1}\|_F=\sqrt{\tfrac{mn}{1-\beta_2^{t+1}}},\quad D_{t+1}=0.2\sqrt{mn}\tfrac{\overline{O}_{t+1}}{\|\overline{O}_{t+1}\|_F}=0.2\sqrt{mn}D_\star.
\end{equation*}
By induction, we have $X_{t+1}=X_t-\eta_{t}D_{t+1}=X_0+q_tD_\star-\eta_{t}D_{t+1}=X_0+q_{t+1}D_\star$. 

We prove (ii). Since $f'$ is a maximum of $L$-Lipschitz functions, we obtain that $f'$ is $L$-Lipschitz. Since $\nabla F(X)$ and $\nabla F(Y)$ are supported in the first column for all $X,Y \in \br^{m\times n}$, we have
\begin{eqnarray*}
\lefteqn{\|\nabla F(X)-\nabla F(Y)\|_\textnormal{nuc}^2 = \|\nabla F(X)-\nabla F(Y)\|_F^2} \\
& = & |f'(\langle D_\star,X-X_0\rangle)-f'(\langle D_\star,Y-X_0\rangle)|^2 + \tfrac{L^2|\langle H_\star-\lambda D_\star,X-Y\rangle|^2}{\|H_\star-\lambda D_\star\|_F^2} \\
& \leq & L^2\left(|\langle D_\star,X-Y\rangle|^2 + \tfrac{|\langle H_\star-\lambda D_\star,X-Y\rangle|^2}{\|H_\star-\lambda D_\star\|_F^2}\right) \\
& \leq & L^2\|(X-Y)e_1^{(n)}\|_2^2 \leq L^2\|X-Y\|_\textnormal{op}^2,
\end{eqnarray*}
where we used the fact that $D_\star$ and $\frac{H_\star-\lambda D_\star}{\|H_\star-\lambda D_\star\|_F}$ are orthonormal and supported in the first column.

We prove (iii). Since $f'\geq 0$, it follows from Eq.~\eqref{eq:lower-interpolant} that
\begin{eqnarray*}
\lefteqn{f(0)-\inf_{q \in \br} f(q) = \int_{-\infty}^0\max_{0\leq j\leq T_0}\max\{2\epsilon\lambda-L|z-q_j|,0\}dz} \\
& \leq & \int_{-\infty}^0 \max\{2\epsilon\lambda-L|q|,0\}dq+\sum_{j=1}^{T_0}\int_{-\infty}^{+\infty}\max\{2\epsilon\lambda-L|q-q_j|,0\}dq \\
& = & \tfrac{2\epsilon^2\lambda^2}{L}+\tfrac{4T_0\epsilon^2\lambda^2}{L}.
\end{eqnarray*}
This implies that $f$ is bounded below. By definition, $F$ is bounded below and $F^\star$ is finite. 

Since $F(X_0)=f(0)=0$, Eq.~\eqref{eq:lower-objective} gives
\begin{equation*}
F(X_0)-F^\star\leq\tfrac{2\epsilon^2\lambda^2}{L}+\tfrac{4T_0\epsilon^2\lambda^2}{L}+\tfrac{2\epsilon^2\|H_\star-\lambda D_\star\|_F^2}{L}=\tfrac{2\epsilon^2}{L}+\tfrac{4T_0\epsilon^2\lambda^2}{L}\leq\tfrac{2\epsilon^2}{L}+\tfrac{16T_0\epsilon^2}{mL}<\tfrac{5\Delta}{8}<\Delta,
\end{equation*}
where we used $\lambda^2<\frac{4}{m}$, $T_0\leq\frac{m\Delta L}{32\epsilon^2}$, and $\epsilon<\frac{1}{4}\sqrt{\Delta L}$.
 
\subsection{Proof of Theorems~\ref{theorem:normuon-upper-deterministic} and~\ref{theorem:normuon-upper-stochastic}}
We first prove the descent inequality in Eq.~\eqref{eq:main-descent-ineq} for the sake of completeness.
\begin{lemma}\label{lem:smooth-descent}
If Assumption~\ref{assumption:smooth} holds with $L>0$, we have
\begin{equation}\label{eq:upper-smooth-descent}
F(Y)\leq F(X)+\langle\nabla F(X),Y-X\rangle+\tfrac{L}{2}\|Y-X\|_\textnormal{op}^2 \textnormal{ for any } X,Y \in \br^{m\times n}. 
\end{equation}
In addition, $F(X)-F^\star \leq \Delta$ implies $\|\nabla F(X)\|_\textnormal{nuc} \leq \sqrt{2\Delta L}$.
\end{lemma}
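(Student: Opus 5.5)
The plan is the standard fundamental-theorem-of-calculus argument, with the duality between $\|\cdot\|_\textnormal{op}$ and $\|\cdot\|_\textnormal{nuc}$ replacing Cauchy--Schwarz.

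\textbf{Descent inequality.} Fix $X,Y$ and write $Z_\tau = X+\tau(Y-X)$ for $\tau\in[0,1]$. Since $F$ is differentiable with Lipschitz (hence continuous) gradient by Assumption~\ref{assumption:smooth}, the map $\tau\mapsto F(Z_\tau)$ is continuously differentiable. We therefore have
\begin{equation*}
F(Y)-F(X)-\langle\nabla F(X),Y-X\rangle=\int_0^1\langle\nabla F(Z_\tau)-\nabla F(X),Y-X\rangle\,d\tau.
\end{equation*}
The integrand is bounded by the Hölder-type inequality $\langle A,B\rangle\leq\|A\|_\textnormal{nuc}\|B\|_\textnormal{op}$. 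This inequality follows from the duality identity stated in the preliminaries after rescaling $B$ to $B/\|B\|_\textnormal{op}$. Applying Assumption~\ref{assumption:smooth} then bounds the integrand by $L\tau\|Y-X\|_\textnormal{op}^2$. Integrating over $\tau\in[0,1]$ gives the factor $L/2$, which is Eq.~\eqref{eq:upper-smooth-descent}.

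\textbf{Gradient bound.} If $\nabla F(X)=0$, the claim is trivial. Otherwise, take $Y=X-\frac{\|\nabla F(X)\|_\textnormal{nuc}}{L}\operatorname{Polar}(\nabla F(X))$. Note that $\|\operatorname{Polar}(\nabla F(X))\|_\textnormal{op}=1$ and $\langle\nabla F(X),\operatorname{Polar}(\nabla F(X))\rangle=\|\nabla F(X)\|_\textnormal{nuc}$. Substituting into Eq.~\eqref{eq:upper-smooth-descent} gives
\begin{equation*}
F^\star\leq F(Y)\leq F(X)-\tfrac{1}{2L}\|\nabla F(X)\|_\textnormal{nuc}^2.
\end{equation*}
Hence $\|\nabla F(X)\|_\textnormal{nuc}^2\leq 2L(F(X)-F^\star)\leq 2L\Delta$, which is the stated bound.

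\textbf{Main subtlety.} There is no real obstacle. The only point needing care is justifying $\langle A,B\rangle\leq\|A\|_\textnormal{nuc}\|B\|_\textnormal{op}$ from the stated duality, including the degenerate case $B=0$, and checking that the integral representation is valid. The latter holds because continuity of $\nabla F$ follows from Assumption~\ref{assumption:smooth}, using equivalence of norms in finite dimensions.
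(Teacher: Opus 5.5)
Your proposal is correct and matches the paper's proof. It uses the same integral representation bounded via operator/nuclear-norm duality, and the same test point $Y=X-\frac{1}{L}\|\nabla F(X)\|_\textnormal{nuc}\operatorname{Polar}(\nabla F(X))$ for the gradient bound. Your explicit treatment of the case $\nabla F(X)=0$ and of the validity of the integral representation adds care that the paper leaves implicit.
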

\begin{proof}
For any $X,Y \in \br^{m\times n}$, we have 
\begin{equation*}
F(Y)-F(X)=\int_0^1 \langle\nabla F((1-s)X+sY),Y-X\rangle ds \leq \langle \nabla F(X), Y-X\rangle + \tfrac{L}{2}\|Y-X\|_\textnormal{op}^2. 
\end{equation*}
Suppose that $\nabla F(X)=U\Sigma V^\top$ is a compact SVD. Then, we let $Y=X-\frac{1}{L}\|\nabla F(X)\|_\textnormal{nuc}UV^\top$ and obtain that $F^\star\leq F(Y)\leq F(X)-\frac{\|\nabla F(X)\|_\textnormal{nuc}^2}{2L}$.  
\end{proof}
We then bound $\langle M_{t+1},D_{t+1}\rangle$ in the following lemma. 
\begin{lemma}
\label{lem:direction}
If Assumption~\ref{assumption:polar} holds and $\ell \in [0,\frac{1}{d})$, we have $\langle M_{t+1},D_{t+1}\rangle \geq 0.2\kappa\sqrt{mn}\|M_{t+1}\|_\textnormal{nuc}$ and $\|D_{t+1}\|_\textnormal{op}\leq\|D_{t+1}\|_F=0.2\sqrt{mn}$ for any $\alpha \geq 0$ and any $t \geq 0$. 
\end{lemma}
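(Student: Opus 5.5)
The plan is to reduce the lemma to a ratio estimate and bound the numerator and the denominator separately. The Frobenius statement is immediate: $D_{t+1}=0.2\sqrt{mn}\,\overline{O}_{t+1}/\|\overline{O}_{t+1}\|_F$ gives $\|D_{t+1}\|_F=0.2\sqrt{mn}$, and always $\|\cdot\|_\textnormal{op}\le\|\cdot\|_F$. If $M_{t+1}=0$ there is nothing to prove, so assume $M:=M_{t+1}\neq0$.

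\textbf{Setup.} Write $Z=M/\|M\|_F=U\Sigma V^\top$ and $O:=O_{t+1}=U\widetilde\Sigma V^\top$, as in Assumption~\ref{assumption:polar}. Then $\overline{O}_{t+1}=WO$ with $W=\operatorname{diag}(w_i)$ and $w_i=1/(\sqrt{v_{t+1,i}}+\alpha)$. A row with $v_{t+1,i}=0$ forces $[O]_{i,:}=0$, so its weight is irrelevant. It therefore suffices to show
\[
\frac{\langle M,WO\rangle}{\|WO\|_F}\ \ge\ \kappa\|M\|_\textnormal{nuc}.
\]

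\textbf{Numerator.} Use that $MO^\top=\|M\|_F\,U\Sigma\widetilde\Sigma U^\top$ is positive semidefinite. Hence each diagonal entry $\langle [M]_{i,:},[O]_{i,:}\rangle=(MO^\top)_{ii}$ is nonnegative, and so $\langle M,WO\rangle=\sum_i w_i(MO^\top)_{ii}\ge w_{\min}\operatorname{tr}(MO^\top)$. Next, $\operatorname{tr}(MO^\top)=\|M\|_F\sum_j\Sigma_{jj}\widetilde\Sigma_{jj}\ge(1-\delta)\|M\|_F\sum_{j:\Sigma_{jj}\ge\ell}\Sigma_{jj}$, and the discarded singular values sum to at most $\ell d$. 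This gives
\[
\operatorname{tr}(MO^\top)\ \ge\ (1-\delta)\bigl(\|M\|_\textnormal{nuc}-\ell d\|M\|_F\bigr)\ \ge\ (1-\delta)(1-\ell d)\|M\|_\textnormal{nuc},
\]
using $\|M\|_F\le\|M\|_\textnormal{nuc}$ and $\ell<1/d$.

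\textbf{Denominator (the main step).} This is where $\alpha$ and the EMA must be handled uniformly. I bound $v_{t+1,i}$ on both sides.
\begin{itemize}
\item Upper bound: every past $O_{s+1}$ has $\|O_{s+1}\|_\textnormal{op}\le1+\delta$, so each row norm is at most $1+\delta$. Since the EMA weights $(1-\beta_2)\beta_2^{t-s}$ sum to at most $1$ (recall $v_0=0$), we get $v_{t+1,i}\le(1+\delta)^2/n=:a^2$. Consequently $w_{\min}\ge1/(a+\alpha)$.
\item Lower bound: keeping only the newest term, $v_{t+1,i}\ge(1-\beta_2)\|[O]_{i,:}\|_2^2/n$.
\end{itemize}
Combining these,
\[
\|WO\|_F^2=\sum_i w_i^2\|[O]_{i,:}\|^2\le\frac{n}{1-\beta_2}\sum_i\frac{v_{t+1,i}}{(\sqrt{v_{t+1,i}}+\alpha)^2}.
\]
The map $s\mapsto s/(s+\alpha)$ is nondecreasing, so each summand is at most $a^2/(a+\alpha)^2\le a^2w_{\min}^2$. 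Therefore
\[
\|WO\|_F\ \le\ w_{\min}(1+\delta)\sqrt{m/(1-\beta_2)}.
\]

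\textbf{Conclusion.} Dividing, $w_{\min}$ cancels, which is why the bound holds for every $\alpha\ge0$. We obtain $\langle M,WO\rangle/\|WO\|_F\ge\frac{(1-\delta)(1-\ell d)\sqrt{1-\beta_2}}{(1+\delta)\sqrt m}\|M\|_\textnormal{nuc}=\kappa\|M\|_\textnormal{nuc}$. Multiplying by $0.2\sqrt{mn}$ gives the claim $\langle M_{t+1},D_{t+1}\rangle\ge0.2\kappa\sqrt{mn}\|M_{t+1}\|_\textnormal{nuc}$.
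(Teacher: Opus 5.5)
Your proof is correct and follows essentially the same route as the paper. You use the same ratio reduction, the same positive semidefiniteness of $M_{t+1}O_{t+1}^\top$ to pull out the smallest row weight, the same trace bound $(1-\delta)(1-\ell d)\|M_{t+1}\|_\textnormal{nuc}$, and the same two-sided bounds on $v_{t+1,i}$. The only cosmetic difference is in the last step: you apply monotonicity in $v_{t+1,i}$ and write the Frobenius bound in terms of $w_{\min}$, so $\alpha$ cancels exactly. The paper instead applies monotonicity in $\|[O_{t+1}]_{i,:}\|_2$ and finishes with $\frac{\sqrt{1-\beta_2}(1+\delta)/\sqrt{n}+\alpha}{(1+\delta)/\sqrt{n}+\alpha}\ge\sqrt{1-\beta_2}$.
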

\begin{proof}
If $M_{t+1}=0$, we have $D_{t+1}=\overline{O}_{t+1}=O_{t+1}=0$ which yields the desired result. Otherwise, by the update rule of Algorithm~\ref{algorithm:normuon}, we have
\begin{equation}
\label{eq:lemma-direction-main}
\langle M_{t+1},D_{t+1}\rangle = \tfrac{0.2\sqrt{mn}\langle M_{t+1},\overline{O}_{t+1}\rangle}{\|\overline{O}_{t+1}\|_F}.
\end{equation}
First, we upper bound $\|\overline{O}_{t+1}\|_F$. By Assumption~\ref{assumption:polar}, we have $O_{t+1}=U\widetilde{\Sigma}V^\top$ where $M_{t+1}=U\Sigma V^\top$ is the compact SVD. For all $t$, the largest singular value of $\frac{M_{t+1}}{\|M_{t+1}\|_F}$ is at least $\frac{1}{\sqrt{d}}>\ell$, which implies that $1-\delta\leq\|O_{t+1}\|_\textnormal{op} \leq 1+\delta$. Since $[v_{t+1}]_i= \beta_2[v_t]_i+\frac{1-\beta_2}{n}\|[O_{t+1}]_{i,:}\|_2^2$ and $v_0=0$, we have
\begin{equation*}
\tfrac{1-\beta_2}{n}\|[O_{t+1}]_{i,:}\|_2^2\leq [v_{t+1}]_i \leq \tfrac{(1+\delta)^2}{n}, \quad \textnormal{for all } i. 
\end{equation*}
By the definition of $\overline{O}_{t+1}$, we have
\begin{equation*}
\|[\overline{O}_{t+1}]_{i,:}\|_2 = \tfrac{\|[O_{t+1}]_{i,:}\|_2}{\sqrt{[v_{t+1}]_i}+\alpha} \leq \tfrac{\|[O_{t+1}]_{i,:}\|_2}{\sqrt{\frac{1-\beta_2}{n}}\|[O_{t+1}]_{i,:}\|_2 + \alpha } \leq \tfrac{1+\delta}{\sqrt{\frac{1-\beta_2}{n}}(1+\delta)+\alpha}, \quad \textnormal{for all } i, 
\end{equation*}
where the last inequality follows from monotonicity and $\|[O_{t+1}]_{i,:}\|_2\leq \|O_{t+1}\|_\textnormal{op}\leq1+\delta$. Thus, we have 
\begin{equation}
\label{eq:barOF}
\|\overline{O}_{t+1}\|_F \leq \tfrac{(1+\delta)\sqrt{m}}{\sqrt{\frac{1-\beta_2}{n}}(1+\delta)+\alpha}.
\end{equation}
Second, we lower bound $\langle M_{t+1},\overline{O}_{t+1}\rangle$. Since $ M_{t+1}O_{t+1}^\top = U\Sigma\widetilde{\Sigma}U^\top\succeq 0$, for $i=1,\ldots,m$, we have $\langle [M_{t+1}]_{i,:}, [O_{t+1}]_{i,:}\rangle\geq 0$. Then, we have
\begin{small}
\begin{equation} \label{eq:MTbarO}
\begin{array}{rcl}
\langle M_{t+1},\overline{O}_{t+1}\rangle & = & \sum_{i=1}^m \langle [M_{t+1}]_{i,:}, [\overline{O}_{t+1}]_{i,:}\rangle = \sum_{i=1}^m \langle [M_{t+1}]_{i,:}, \tfrac{1}{\sqrt{[v_{t+1}]_i}+\alpha}[O_{t+1}]_{i,:}\rangle \\
& \geq & \sum_{i=1}^m  \tfrac{1}{\frac{1+\delta} {\sqrt{n}}+\alpha} \langle [M_{t+1}]_{i,:}, [O_{t+1}]_{i,:}\rangle = \tfrac{1}{\frac{1+\delta} {\sqrt{n}}+\alpha}\langle M_{t+1},O_{t+1}\rangle.
\end{array}
\end{equation}
\end{small}
By Assumption~\ref{assumption:polar}, we obtain from $\Sigma_{jj}\geq \ell \|M_{t+1}\|_F$ that $\widetilde{\Sigma}_{jj}\geq 1-\delta$. It follows that
\begin{equation}\label{eq:MTO}
\begin{array}{rcl}
\langle M_{t+1},O_{t+1}\rangle & \geq & (1-\delta)\sum_{j:\,\Sigma_{jj} \geq \ell\|M_{t+1}\|_F}\Sigma_{jj} \geq (1-\delta)(\|M_{t+1}\|_\textnormal{nuc}-d\ell\|M_{t+1}\|_F) \\
& \geq & (1-\delta)(1-d\ell)\|M_{t+1}\|_\textnormal{nuc},
\end{array}
\end{equation}
where the last inequality uses $\|M_{t+1}\|_F \leq \|M_{t+1}\|_\textnormal{nuc}$. Combining Eq.~\eqref{eq:lemma-direction-main}, Eq.~\eqref{eq:barOF}, Eq.~\eqref{eq:MTbarO}, and Eq.~\eqref{eq:MTO} yields
\begin{equation*}
\tfrac{\langle M_{t+1},D_{t+1}\rangle}{0.2\sqrt{mn}} \geq \tfrac{\sqrt{\frac{1-\beta_2}{n}}(1+\delta)+\alpha}{(1+\delta)\sqrt{m}\,(\frac{1+\delta}{\sqrt{n}}+\alpha)}(1-\delta)(1-d\ell)\|M_{t+1}\|_\textnormal{nuc} \geq \kappa\|M_{t+1}\|_\textnormal{nuc}.
\end{equation*}
By definition, we have $\|D_{t+1}\|_\textnormal{op}\leq\|D_{t+1}\|_F=0.2\sqrt{mn}$. 
\end{proof}

\noindent\emph{Proof of Theorem~\ref{theorem:normuon-upper-deterministic}.}
Since $\beta_1=0$ and $G_t=\nabla F(X_t)$, we have $M_{t+1}=\nabla F(X_t)$. By applying Lemmas~\ref{lem:smooth-descent} and~\ref{lem:direction} and using $X_{t+1}=X_t-\eta D_{t+1}$, we have
\begin{equation*}
\begin{array}{lcl}
F(X_{t+1})-F(X_t) & \leq & -\eta\langle\nabla F(X_t),D_{t+1}\rangle+\tfrac{L\eta^2}{2}\|D_{t+1}\|_\textnormal{op}^2 \\
& \leq & -0.2\eta\kappa\sqrt{mn}\|\nabla F(X_t)\|_\textnormal{nuc}+\tfrac{L\eta^2}{2}(0.2\sqrt{mn})^2\\
& = & -\tfrac{\kappa^2\epsilon}{L}\|\nabla F(X_t)\|_\textnormal{nuc}+\tfrac{\kappa^2\epsilon^2}{2L}.
\end{array}
\end{equation*}
Summing the above inequality over $t=0,1,\ldots,T-1$ and using $F(X_0)-F^\star \leq \Delta$ yields
\begin{equation*}
\min_{0\leq t<T}\|\nabla F(X_t)\|_\textnormal{nuc} \leq \tfrac{1}{T} \sum_{t=0}^{T-1}\|\nabla F(X_t)\|_\textnormal{nuc}\leq\tfrac{\Delta L}{\kappa^2\epsilon T}+\tfrac{\epsilon}{2}. 
\end{equation*} 
We choose $T:=\lceil\frac{2\Delta L}{\kappa^2\epsilon^2}\rceil$. By the definition of $\kappa$, we have $\min_{0\leq t<T}\|\nabla F(X_t)\|_\textnormal{nuc} \leq \epsilon$ and $T=O(\frac{m\Delta L}{\epsilon^2})$. Since Algorithm~\ref{algorithm:normuon} calls the gradient oracle once per iteration, the total number of calls to the gradient oracle is bounded by $O(\frac{m\Delta L}{\epsilon^2})$. \hfill$\square$

\noindent\emph{Proof of Theorem~\ref{theorem:normuon-upper-stochastic}.}
By applying Lemma~\ref{lem:smooth-descent} and using $X_{t+1}=X_t-\eta D_{t+1}$, we have
\begin{equation}\label{eq:decsent-lemma-app}
F(X_{t+1})-F(X_t)\leq-\eta\langle\nabla F(X_t),D_{t+1}\rangle+\tfrac{L\eta^2}{2}\|D_{t+1}\|_\textnormal{op}^2.
\end{equation}
We write $M_{t+1}=\nabla F(X_t)+E_t+B_t$, where
\begin{equation*}
\begin{array}{lcl}
E_t & = & -\beta_1^{t+1}\nabla F(X_0)+\sum_{s=1}^t\beta_1^{t-s+1}(\nabla F(X_{s-1})-\nabla F(X_s)), \\
B_t & = & (1-\beta_1)\sum_{s=0}^t\beta_1^{t-s}(G_s-\nabla F(X_s)).
\end{array}
\end{equation*}
By using $\|D_{t+1}\|_\textnormal{op}\leq\|D_{t+1}\|_F=0.2\sqrt{mn}$ (see Lemma~\ref{lem:direction}) and $\kappa \leq \frac{1}{\sqrt{d}}$, we have
\begin{eqnarray*}
\lefteqn{\tfrac{\langle\nabla F(X_t),D_{t+1}\rangle}{0.2\sqrt{mn}} \geq \kappa\|M_{t+1}\|_\textnormal{nuc}-\|E_t\|_\textnormal{nuc}-\|B_t\|_F} \\
& \geq & \kappa\|\nabla F(X_t)\|_\textnormal{nuc}-(1+\kappa)\|E_t\|_\textnormal{nuc}-\|B_t\|_F-\kappa\|B_t\|_\textnormal{nuc} \\
& \geq & \kappa\|\nabla F(X_t)\|_\textnormal{nuc}-2\|E_t\|_\textnormal{nuc}-2\|B_t\|_F.
\end{eqnarray*}
Combining this inequality with Eq.~\eqref{eq:decsent-lemma-app} and using $\|D_{t+1}\|_\textnormal{op} \leq 0.2\sqrt{mn}$ (see Lemma~\ref{lem:direction}) and the definition of $\eta$ yields
\begin{equation*}
F(X_{t+1})-F(X_t) \leq -\tfrac{\kappa(1-\beta_1)\epsilon}{16L}\big(\kappa\|\nabla F(X_t)\|_\textnormal{nuc}-2\|E_t\|_\textnormal{nuc}-2\|B_t\|_F\big)+\tfrac{\kappa^2(1-\beta_1)^2\epsilon^2}{512L}. 
\end{equation*}
Summing the above inequality over $t=0,\ldots,T-1$ and using $F(X_0)-F^\star \leq \Delta$ yields
\begin{equation}\label{eq:upper-summed-descent}
\kappa\textstyle\sum_{t=0}^{T-1}\|\nabla F(X_t)\|_\textnormal{nuc}\leq\tfrac{16\Delta L}{\kappa(1-\beta_1)\epsilon}+2\textstyle\sum_{t=0}^{T-1}\|E_t\|_\textnormal{nuc}+2\textstyle\sum_{t=0}^{T-1}\|B_t\|_F+\tfrac{\kappa(1-\beta_1)\epsilon T}{32}.
\end{equation}
First, we bound $\sum_{t=0}^{T-1}\|E_t\|_\textnormal{nuc}$. Indeed, we have $\|\nabla F(X_0)\|_\textnormal{nuc} \leq \sqrt{2\Delta L}$ (see Lemma~\ref{lem:smooth-descent}) and obtain from Assumption~\ref{assumption:smooth} and $\|X_s-X_{s-1}\|_\textnormal{op} \leq 0.2\eta\sqrt{mn}$ that $\|\nabla F(X_s)-\nabla F(X_{s-1})\|_\textnormal{nuc} \leq \frac{\kappa(1-\beta_1)\epsilon}{16}$ for $1\leq s\leq T$. This implies 
\begin{equation*}
\|E_t\|_\textnormal{nuc} \leq \beta_1^{t+1}\sqrt{2\Delta L}+\tfrac{\kappa(1-\beta_1)\epsilon}{16}\textstyle\sum_{s=1}^t\beta_1^{t-s+1} \leq \beta_1^{t+1}\sqrt{2\Delta L}+\tfrac{\kappa\epsilon}{16},
\end{equation*}
and
\begin{equation*}
\tfrac{1}{T}\sum_{t=0}^{T-1}\|E_t\|_\textnormal{nuc} \leq \tfrac{\sqrt{2\Delta L}}{(1-\beta_1)T}+\tfrac{\kappa\epsilon}{16}.
\end{equation*}
Second, we bound $\EE[\|B_t\|_F]$. Indeed, we let $\FCal_s$ be the sigma algebra generated by $X_0,\xi_0,\ldots,\xi_{s-1}$. The independence of $\xi_s$ and $\FCal_s$ and Assumption~\ref{assumption:noise} give $\EE[G_s-\nabla F(X_s)|\FCal_s]=0$ and $\EE[\|G_s-\nabla F(X_s)\|_\textnormal{nuc}^2 | \FCal_s] \leq \sigma^2$. Since $G_s-\nabla F(X_s)$ is $\FCal_j$-measurable and $\EE[\langle G_s-\nabla F(X_s),G_j-\nabla F(X_j)\rangle]=0$ for $s<j$, we have
\begin{equation*}
\begin{array}{rcl}
\EE[\|B_t\|_F^2] & = & (1-\beta_1)^2\sum_{s=0}^t\beta_1^{2(t-s)}\EE[\|G_s-\nabla F(X_s)\|_F^2] \ \leq \ (1-\beta_1)^2\sigma^2\sum_{s=0}^t\beta_1^{2(t-s)} \\ 
& \leq & (1-\beta_1)\sigma^2.
\end{array}
\end{equation*}
Taking expectations in Eq.~\eqref{eq:upper-summed-descent} and putting these pieces together, we have
\begin{equation}\label{eq:upper-weighted-master}
\tfrac{1}{T}\sum_{t=0}^{T-1} \EE[\|\nabla F(X_t)\|_\textnormal{nuc}] \leq \tfrac{16\Delta L}{\kappa^2(1-\beta_1)\epsilon T}+\tfrac{2\sqrt{2\Delta L}}{\kappa(1-\beta_1)T}+\tfrac{5\epsilon}{32}+\tfrac{2\sigma\sqrt{1-\beta_1}}{\kappa}.
\end{equation}
We choose $T:=\lceil\frac{64\Delta L}{\kappa^2(1-\beta_1)\epsilon^2}\rceil$. Using the definition of $\beta_1$ and the condition $\epsilon\leq\sqrt{\Delta L}$, we have
\begin{equation*}
\EE[\|\nabla F(\widetilde{X})\|_\textnormal{nuc}] = \tfrac{1}{T}\sum_{t=0}^{T-1}\EE[\|\nabla F(X_t)\|_\textnormal{nuc}] \leq \left(\tfrac{1}{4}+\tfrac{\sqrt{2}}{32}+\tfrac{5}{32}+\tfrac{1}{4}\right)\epsilon < \epsilon.
\end{equation*}
In addition, we have $\frac{1}{1-\beta_1}=\max\{1,\frac{64\sigma^2}{\kappa^2\epsilon^2}\} \leq 1+\frac{64\sigma^2}{\kappa^2\epsilon^2}$. By the definition of $\kappa$, we have
\begin{equation*}
T=\left\lceil\tfrac{64\Delta L}{\kappa^2(1-\beta_1)\epsilon^2}\right\rceil\leq1+\tfrac{64\Delta L}{\kappa^2\epsilon^2}+\tfrac{4096\Delta L\sigma^2}{\kappa^4\epsilon^4}=O\left(\tfrac{m\Delta L}{\epsilon^2}+\tfrac{m^2\Delta L\sigma^2}{\epsilon^4}\right).
\end{equation*}
Since Algorithm~\ref{algorithm:normuon} calls the stochastic gradient oracle once per iteration, the total number of calls to the stochastic gradient oracle is bounded by $O(\frac{m\Delta L}{\epsilon^2}+\frac{m^2\Delta L\sigma^2}{\epsilon^4})$.
\hfill$\square$

\section{Additional Experiments} \label{app:exp}
We present detailed setups and additional results for our experiments.
\subsection{Setup and additional results for the synthetic experiment}
\paragraph{Setup.} 
We run Muon, NorMuon (Algorithm~\ref{algorithm:normuon}), and practical NorMuon with the scaling in Eq.~\eqref{eq:normuon-practical} on the same 150 fixed random seeds. We use 50 of these seeds to tune the constant step size and momentum parameters of each method for each $m$. For both Muon and NorMuon, we search over $\beta_1\in\{0,0.5,0.9,0.95\}$, and for NorMuon we additionally search over $\beta_2\in\{0,0.9,0.95,0.99\}$. We search over the effective step sizes $\gamma_j=10^{-5+0.5j}$ for $j=0,\ldots,10$. To match the global update scale across methods, we set $\eta=\gamma_j$ for Muon and NorMuon with Eq.~\eqref{eq:normuon-practical}, and $\eta=\frac{\gamma_j}{0.2\sqrt{m}}$ for NorMuon (Algorithm~\ref{algorithm:normuon}). For each method and $m$, we select the hyperparameters that achieve the smallest minimum loss $\min_{0\leq t\leq T} F(X_t)$ over $T=1000$ iterations. We then evaluate all methods with their selected hyperparameters on the remaining 100 seeds. The selected hyperparameters are reported in Table~\ref{tab:synthetic-hyperparameters}, and the loss curves are shown in Figure~\ref{fig:synthetic}. As $m$ grows, NorMuon selects larger step sizes. Hence, its loss decreases faster initially but stalls later.
\begin{table}[!t]
\centering
\caption{Selected hyperparameters $(\gamma,\beta_1,\beta_2)$ for the synthetic experiment.}
\label{tab:synthetic-hyperparameters}
\begin{tabular}{lccc}
\toprule
$m$ 
& Muon 
& \shortstack{NorMuon\\(Algorithm~\ref{algorithm:normuon})}
& \shortstack{NorMuon with\\Eq.~\eqref{eq:normuon-practical}} \\
\midrule
$8$    
& $(10^{-3}, 0, \text{--})$
& $(10^{-3}, 0, 0.95)$
& $(10^{-3}, 0, 0)$ \\
$32$   
& $(10^{-3}, 0, \text{--})$
& $(10^{-2.5}, 0.5, 0.99)$
& $(10^{-2.5}, 0.5, 0.99)$ \\
$128$  
& $(10^{-3}, 0, \text{--})$
& $(10^{-2.5}, 0.5, 0.9)$
& $(10^{-2.5}, 0.5, 0.9)$ \\
$512$  
& $(10^{-3}, 0, \text{--})$
& $(10^{-2}, 0.5, 0.9)$
& $(10^{-2}, 0.5, 0.9)$ \\
$2048$ 
& $(10^{-3}, 0, \text{--})$
& $(10^{-1.5}, 0.5, 0.9)$
& $(10^{-1.5}, 0.5, 0.9)$ \\
\bottomrule
\end{tabular}
\end{table}
\begin{figure}[!t]
\centering
\includegraphics[width=\linewidth]{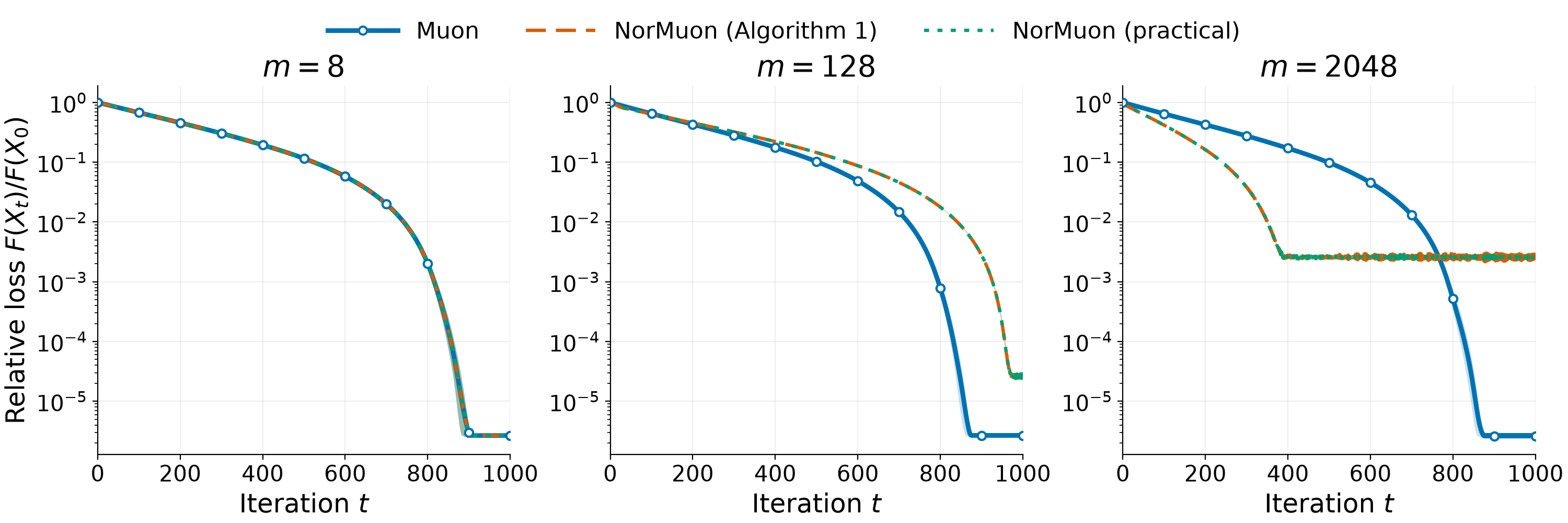}
\caption{Loss curves for the synthetic experiment.}
\label{fig:synthetic}
\end{figure}
\paragraph{Cosine schedule.}
We additionally compare Muon and NorMuon under a cosine step-size schedule on the same hard instance and observe a similar dimension-dependent slowdown for NorMuon. The setup remains the same except that $T=25$ and we search over effective peak step sizes $\gamma_j=10^{-5+0.5j}$ for $j=0,\ldots,12$ for all methods. We use the cosine schedule $\eta_t=\frac{\eta}{2}\left(1+\cos\frac{\pi t}{T}\right)$ for $t=0,\ldots,T-1$, where the peak step size is $\eta=\gamma_j$ for Muon and NorMuon with Eq.~\eqref{eq:normuon-practical}, and $\eta=\gamma_j/(0.2\sqrt{m})$ for NorMuon in Algorithm~\ref{algorithm:normuon}. For each method and $m$, we select the hyperparameters that maximize the mean loss reduction $\log_{10}F(X_0)-\log_{10}\min_{0\leq t\leq T}F(X_t)$ over 50 tuning seeds. The selected hyperparameters are reported in Table~\ref{tab:synthetic-cosine-hyperparameters}. We then evaluate all methods with their selected hyperparameters on the remaining 100 seeds. The evaluation results are reported in Table~\ref{tab:synthetic-cosine-iterations}, and the loss curves are shown in Figure~\ref{fig:synthetic-cosine}. 
\begin{table}[!t]
\centering
\caption{Selected hyperparameters $(\gamma,\beta_1,\beta_2)$
for the synthetic experiment with a cosine step-size schedule.}
\label{tab:synthetic-cosine-hyperparameters}
\begin{tabular}{lccc}
\toprule
$m$ & Muon
& \shortstack{NorMuon\\(Algorithm~\ref{algorithm:normuon})}
& \shortstack{NorMuon with\\Eq.~\eqref{eq:normuon-practical}} \\
\midrule
$8$
& $(10^{-1},0,\text{--})$
& $(10^{-1},0,0.95)$
& $(10^{-1},0,0)$ \\
$32$
& $(10^{-1},0,\text{--})$
& $(10^{-0.5},0,0.9)$
& $(10^{-0.5},0,0.9)$ \\
$128$
& $(10^{-1},0,\text{--})$
& $(10^{-0.5},0,0.9)$
& $(10^{-0.5},0,0.9)$ \\
$512$
& $(10^{-1},0,\text{--})$
& $(10^{0},0,0.99)$
& $(10^{0},0,0.99)$ \\
$2048$
& $(10^{-1},0,\text{--})$
& $(10^{0},0,0)$
& $(10^{0},0,0)$ \\
\bottomrule
\end{tabular}
\end{table}

\begin{table}[!t]
\centering
\caption{Mean $\pm$ $2\times$ standard error of the loss reduction $\log_{10}F(X_0)-\log_{10}\min_{0\leq t\leq T}F(X_t)$ over 100 evaluation seeds under a cosine schedule with $T=25$. A larger value means better performance.}
\label{tab:synthetic-cosine-iterations}
\begin{tabular}{lrrrrr}
\toprule
$m$ & $8$ & $32$ & $128$ & $512$ & $2048$ \\
\midrule
Muon
& $5.51_{\pm 0.05}$ & $5.57_{\pm 0.04}$ & $5.55_{\pm 0.05}$
& $5.57_{\pm 0.04}$ & $5.57_{\pm 0.05}$ \\
NorMuon (Algorithm~\ref{algorithm:normuon})
& $5.51_{\pm 0.05}$ & $4.64_{\pm 0.04}$ & $4.75_{\pm 0.05}$
& $3.63_{\pm 0.05}$ & $3.58_{\pm 0.05}$ \\
NorMuon with Eq.~\eqref{eq:normuon-practical}
& $5.51_{\pm 0.05}$ & $4.64_{\pm 0.04}$ & $4.75_{\pm 0.05}$
& $3.63_{\pm 0.05}$ & $3.58_{\pm 0.05}$ \\
\bottomrule
\end{tabular}
\end{table}

\begin{figure}[!t]
\centering
\includegraphics[width=\linewidth]{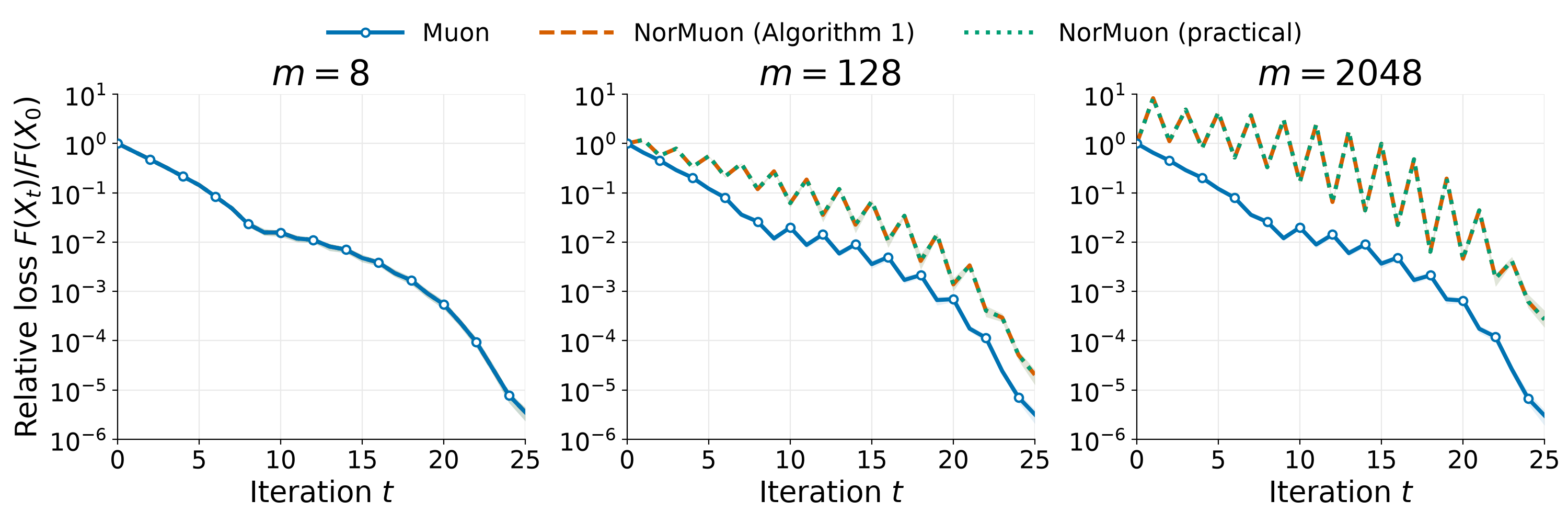}
\caption{Loss curves for the cosine schedule synthetic experiment with $T=25$. }
\label{fig:synthetic-cosine}
\end{figure}

\subsection{Setup for image classification}
For Muon, NorMuon (with both column and row normalization), and AdamW, we use a step-size schedule with $5\%$ linear warmup followed by cosine decay to $0$. Muon and NorMuon use 5 PolarExpress iterations~\citep{Amsel-2026-Polar}. For AdamW, we sweep the step size over $\{0.0003, 0.001, 0.003, 0.01\}$ with momentum factors $(\beta_1, \beta_2)=(0.9,0.999)$. For Muon and NorMuon, we sweep the step size over $\{0.03, 0.05, 0.10, 0.15\}$ and the momentum factor over $\{0.5, 0.6, 0.7, 0.8, 0.9, 0.95\}$. NorMuon additionally uses $\beta_2=0.9$. For all methods, we sweep the decoupled weight decay over $\{0, 0.01\}$. We use label smoothing with a factor of 0.2 and clip the Euclidean norm of the gradient to at most $1$.

\subsection{Setup and additional results for LLM pretraining} 
We pretrain decoder-only transformers with the nanochat codebase~\citep{Karpathy-2025-Nanochat} at depths $12$, $24$, and $32$. All models use $\operatorname{ReLU}^2$ activations, root mean square (RMS) normalization \citep{Zhang-2019-Root}, rotary positional embeddings~\citep{Su-2024-Roformer}, query-key normalization~\citep{Henry-2020-Query}, untied token embeddings and language-model (LM) heads, and separate value embeddings~\citep{Zhou-2025-Value}. Following the standard design of Muon~\citep{Jordan-2024-Muon}, Muon or NorMuon updates only the transformer's weight matrices, while AdamW updates the embeddings, LM head, and other trainable parameters. Table~\ref{tab:llm_configs} summarizes the model configurations and training token budgets for the experiments in Table~\ref{tab:llm_results} and Figure~\ref{fig:llm_results}. We follow the step-size schedule of nanochat~\citep{Karpathy-2025-Nanochat}: the step size is warmed up linearly over the first $40$ updates, held constant until $35\%$ of training, and then decayed linearly to $5\%$ of its peak value over the remaining $65\%$ of training. The original NorMuon~\citep{Li-2026-Normuon} rescales each row as $[\overline{O}_{t+1}]_{i,:}=[O_{t+1}]_{i,:}/(\sqrt{v_{t+1,i}}+\alpha)$ with $\alpha = 10^{-10}$. See Algorithm~\ref{algorithm:normuon} for details. In our experiments, we instead use $[\overline{O}_{t+1}]_{i,:} = [O_{t+1}]_{i,:}/\sqrt{\max\{v_{t+1,i},\alpha\}}$ with the same $\alpha$, following~\cite{Karpathy-2025-Nanochat}.

\begin{table}[!t] 
\centering 
\caption{Model configurations and training budgets for LLM pretraining.}
\label{tab:llm_configs}
\begin{tabular}{lcccccccc}
\toprule
Model & Depth & \shortstack{Hidden\\dim.} & \shortstack{MLP\\dim.} & \shortstack{Vocab.\\size} & \shortstack{Seq.\\length} & \shortstack{Matrix\\params} & \shortstack{Batch size\\(tokens)} & \shortstack{Training\\tokens} \\
\midrule
$286$M  & $12$ & $768$  & $3072$ & $32{,}768$  & $2048$ & $84.9$M & $524{,}288$     & $2.20$B \\
$1.38$B & $24$ & $1536$ & $6144$ & $32{,}768$  & $2048$ & $679$M  & $1{,}048{,}576$ & $14.6$B \\
$6.44$B & $32$ & $2048$ & $8192$ & $131{,}072$ & $4096$ & $1.61$B & $1{,}048{,}576$ & $19.7$B \\
\bottomrule
\end{tabular}
\end{table}

\paragraph{Random seeds.} For the 1.38B model, we run Muon and NorMuon with 3 random seeds and report the mean $\pm$ standard deviation in Table~\ref{tab:appendix-muon-normuon}. The average improvement of NorMuon over Muon exceeds one standard deviation. Due to space constraints, Table~\ref{tab:llm_results} reports only the mean.
\begin{table}[t!]
\caption{Mean $\pm$ standard deviation of LLM pretraining downstream accuracies and test losses.}
\label{tab:appendix-muon-normuon}
\centering
\begin{tabular}{@{}lcc@{}}
\toprule
Benchmark & Muon & NorMuon \\
\midrule
MMLU       & \textbf{33.3}$_{\pm 0.3}$ & \textbf{33.3}$_{\pm 0.06}$ \\
HellaSwag  & 60.3$_{\pm 0.2}$          & \textbf{60.9}$_{\pm 0.4}$ \\
PIQA       & 75.9$_{\pm 0.3}$          & \textbf{76.1}$_{\pm 0.4}$ \\
WinoGrande & 55.5$_{\pm 0.4}$          & \textbf{58.4}$_{\pm 1.5}$ \\
ARC-C      & 43.8$_{\pm 0.8}$          & \textbf{45.4}$_{\pm 0.3}$ \\
ARC-E      & 73.6$_{\pm 0.3}$          & \textbf{75.0}$_{\pm 0.8}$ \\
BoolQ      & \textbf{64.4}$_{\pm 2.6}$ & 62.0$_{\pm 1.7}$ \\
CSQA       & \textbf{60.0}$_{\pm 1.0}$ & 59.6$_{\pm 0.5}$ \\
SIQA       & 49.1$_{\pm 0.1}$          & \textbf{50.2}$_{\pm 0.03}$ \\
OBQA       & 47.1$_{\pm 2.8}$          & \textbf{49.8}$_{\pm 2.1}$ \\
\midrule
Avg        & 56.3$_{\pm 0.4}$          & \textbf{57.1}$_{\pm 0.2}$ \\
Test loss  & 2.2914$_{\pm 0.0006}$     & \textbf{2.2810}$_{\pm 0.0009}$ \\
\bottomrule
\end{tabular}
\end{table}

\paragraph{Setup for ablation studies.} We conduct ablation studies on the $1.38$B model trained on the compute-optimal budget of $7.66$B tokens. Unless otherwise stated, we use a base step size of $0.02$, $5$ PolarExpress iterations, a weight decay of $0.02$, and Nesterov-type momentum with $\beta_1=0.95$.  

We next describe the setups for the other optimizers considered in Section~\ref{sec:exp}. For Aurora~\citep{Dewulf-2026-Aurora}, we use the $K=2$ variant and apply diagonal refinement to tall matrices. For Muon+, we use the column--row variant, which normalizes columns and then rows after the polar approximation. MuonEq-R instead equilibrates rows before the polar approximation. For the $286$M model, we sweep the base step size over $\{0.02,0.03,0.04\}$ for all methods, and all of them select $0.03$. For the $1.38$B model, we use a common base step size of $0.02$ for the reasons given below.

\paragraph{Step-size sensitivity under the compute-optimal setting.} We compare base step sizes in $\{0.01,\allowbreak 0.02,\allowbreak 0.03,0.04\}$ for the $1.38$B model in Figure~\ref{fig:appendix-ablations}(a) and Table~\ref{tab:appendix-step-size}. NorMuon attains lower validation and test losses than Muon at every step size tested. For NorMuon, step sizes $0.01$ and $0.02$ give nearly identical losses, and the loss increases for larger step sizes. Muon is nearly insensitive to step sizes between $0.01$ and $0.03$, attains its lowest loss at $0.03$, and degrades at $0.04$. Even when each optimizer uses its own best step size, NorMuon still attains a lower loss. We therefore use $0.02$ as the common step size for the ablation studies, since it lies in a good regime for both optimizers and enables a comparison at a matched step size.
\begin{figure}[!t] 
\centering 
\includegraphics[width=\linewidth]{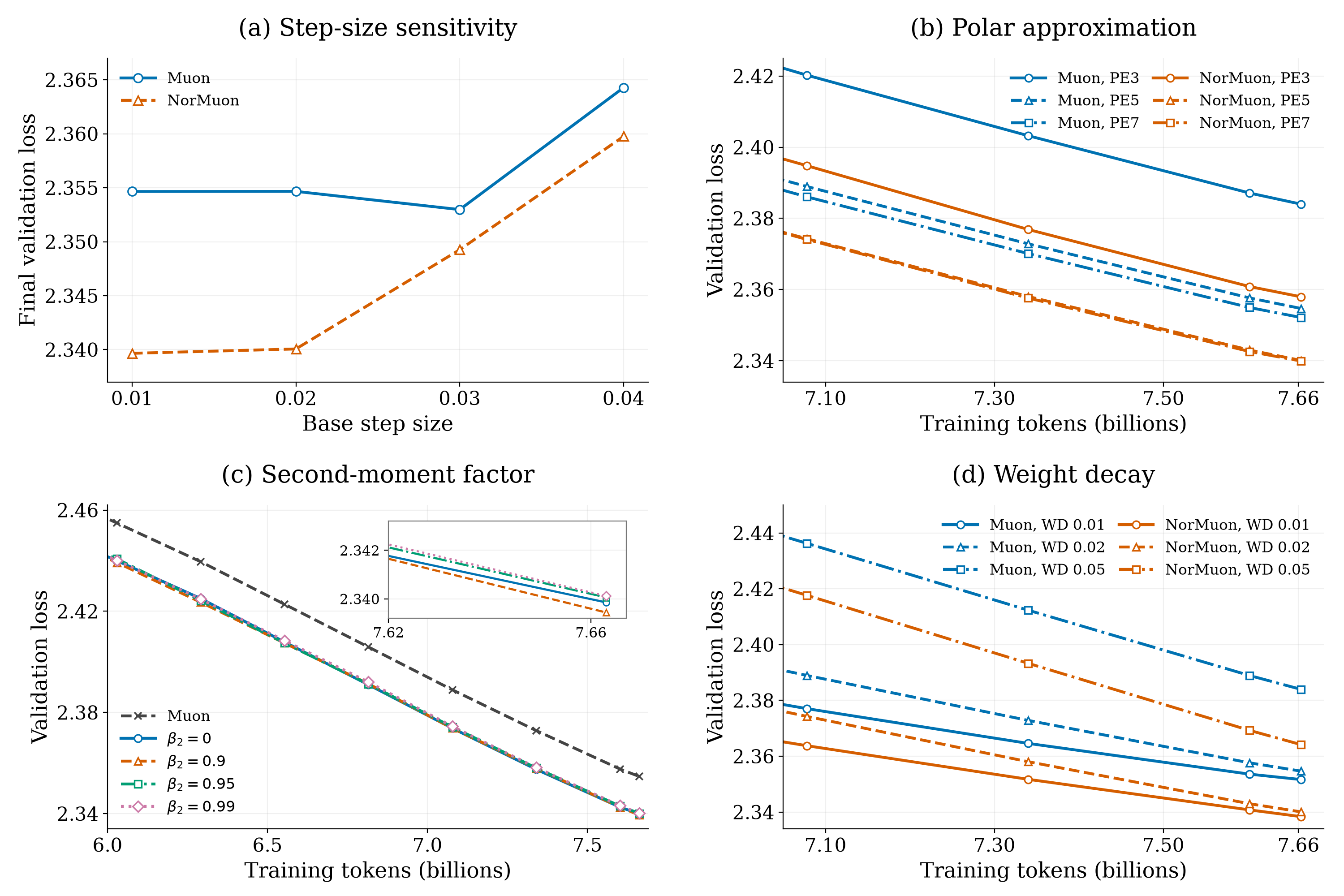}
\caption{Ablation studies on the $1.38$B model trained on $7.66$B tokens. (a) Final validation loss versus step size. (b) Muon and NorMuon with 3, 5, and 7 PolarExpress (PE) iterations. (c) NorMuon with different second-moment factors $\beta_2$. (d) Muon and NorMuon with different weight decay (WD) factors.} \label{fig:appendix-ablations}
\end{figure}
\begin{table}[!t]
\centering
\caption{Test loss versus base step size for the $1.38$B model trained on $7.66$B tokens.}
\label{tab:appendix-step-size}
\begin{tabular}{lcccc} \toprule
Step size & $0.01$ & $0.02$ & $0.03$ & $0.04$ \\ \midrule
Muon & $2.3587$ & $2.3587$ & $\mathbf{2.3571}$ & $2.3684$ \\
NorMuon & $\mathbf{2.3436}$ & $2.3442$ & $2.3533$ & $2.3637$ \\ \bottomrule
\end{tabular}
\end{table}

\paragraph{Accuracy of polar approximation.} We test whether the benefit of NorMuon persists across polar approximation accuracies by running Muon and NorMuon with $3$, $5$, or $7$ PolarExpress (PE) steps in Figure~\ref{fig:appendix-ablations}(b) and Table~\ref{tab:ablation-pe}. NorMuon achieves lower validation and test losses than Muon in every setting. Increasing the number of PE steps from $3$ to $5$ substantially reduces the test loss of both optimizers, while increasing it further to $7$ has little effect. The gap between Muon and NorMuon persists across all settings and is largest with $3$ PE steps. 
\begin{table}[!t]
\centering
\caption{Test loss versus the number of PolarExpress (PE) steps.}
\label{tab:ablation-pe}
\begin{tabular}{lccc}
\toprule
PE steps & $3$ & $5$ & $7$ \\
\midrule
Muon    & $2.3882$ & $2.3587$ & $2.3562$ \\
NorMuon & $2.3620$ & $2.3442$ & $2.3440$ \\
\bottomrule
\end{tabular}
\end{table}

\paragraph{Second-moment factor.} We compare second-moment factors $\beta_2\in\{0,0.9,0.95,0.99\}$ for NorMuon in Figure~\ref{fig:appendix-ablations}(c) and Table~\ref{tab:ablation-beta2}. NorMuon achieves nearly identical validation and test losses across all tested values of $\beta_2$ and outperforms Muon in every case.
\begin{table}[!t]
\centering
\caption{Test loss versus the second-moment factor $\beta_2$.}
\label{tab:ablation-beta2}
\begin{tabular}{lcccccc}
\toprule
NorMuon $\beta_2$ & $0$ & $0.9$ & $0.95$ & $0.99$ & Muon \\
\midrule
Test loss & $2.3440$ & $2.3435$ & $2.3442$ & $2.3439$ & $2.3587$ \\
\bottomrule
\end{tabular}
\end{table}

\paragraph{Weight decay.} We compare decoupled weight decay factors of $0.01$, $0.02$, and $0.05$ for both Muon and NorMuon in Figure~\ref{fig:appendix-ablations}(d) and Table~\ref{tab:ablation-wd}. NorMuon achieves lower test loss than Muon at every weight decay tested. 
\begin{table}[!t]
\centering
\caption{Test loss versus weight decay. }
\label{tab:ablation-wd}
\begin{tabular}{lccc}
\toprule
Weight decay & $0.01$ & $0.02$ & $0.05$ \\ \midrule
Muon & $2.3557$ & $2.3587$ & $2.3880$ \\
NorMuon & $2.3424$ & $2.3442$ & $2.3685$ \\
\bottomrule
\end{tabular}
\end{table}

\paragraph{Alignment.}
On the $286$M model, we compare Muon and the practical NorMuon variant in Eq.~\eqref{eq:normuon-practical} using the alignment $\langle M_{t+1},D_{t+1}\rangle$ from Lemma~\ref{lem:direction}. At a fixed time step $t+1$, let $M_\textnormal{M}$ and $M_\textnormal{N}$ denote the gradient momenta $M_{t+1}$ of Muon and NorMuon, respectively. The corresponding update directions are
\begin{equation*}
D_\textnormal{M}=\operatorname{Polar}_{\ell,\delta}\big(\tfrac{M_\textnormal{M}}{\|M_\textnormal{M}\|_F}\big),\qquad
D_\textnormal{N}=\tfrac{\|O\|_F}{\|\overline{O}\|_F}\overline{O}\ \ \textnormal{with}\ \ O=\operatorname{Polar}_{\ell,\delta}\big(\tfrac{M_\textnormal{N}}{\|M_\textnormal{N}\|_F}\big),
\end{equation*}
where $\overline{O}$ denotes the row-normalized version of $O$, computed as in Algorithm~\ref{algorithm:normuon}. We compare the relative difference in alignment, $\frac{\langle M_\textnormal{M},D_\textnormal{M}\rangle-\langle M_\textnormal{N},D_\textnormal{N}\rangle}{\langle M_\textnormal{M},D_\textnormal{M}\rangle}$, together with the analogous relative differences in $\|M\|_\textnormal{nuc}$, $\|D\|_\textnormal{op}$, $\|M\|_F$, and $\|D\|_F$ in Figure~\ref{fig:alignment}. A negative value indicates that the corresponding quantity is larger for NorMuon. On the MLP matrices, NorMuon tends to attain a slightly larger alignment $\langle M_\textnormal{N},D_\textnormal{N}\rangle$ after the first few hundred steps, along with a larger $\|M_\textnormal{N}\|_\textnormal{nuc}$, while on the QKVO matrices, the two optimizers have similar alignment. This is consistent with the observation of~\citet{Dewulf-2026-Aurora} that NorMuon mainly affects the MLP matrices. Finally, by the design of Eq.~\eqref{eq:normuon-practical}, the update directions of Muon and NorMuon have similar Frobenius norms, although NorMuon's updates have larger operator norms on the MLP matrices.
\begin{figure}[!t]
\centering
\includegraphics[width=\linewidth]{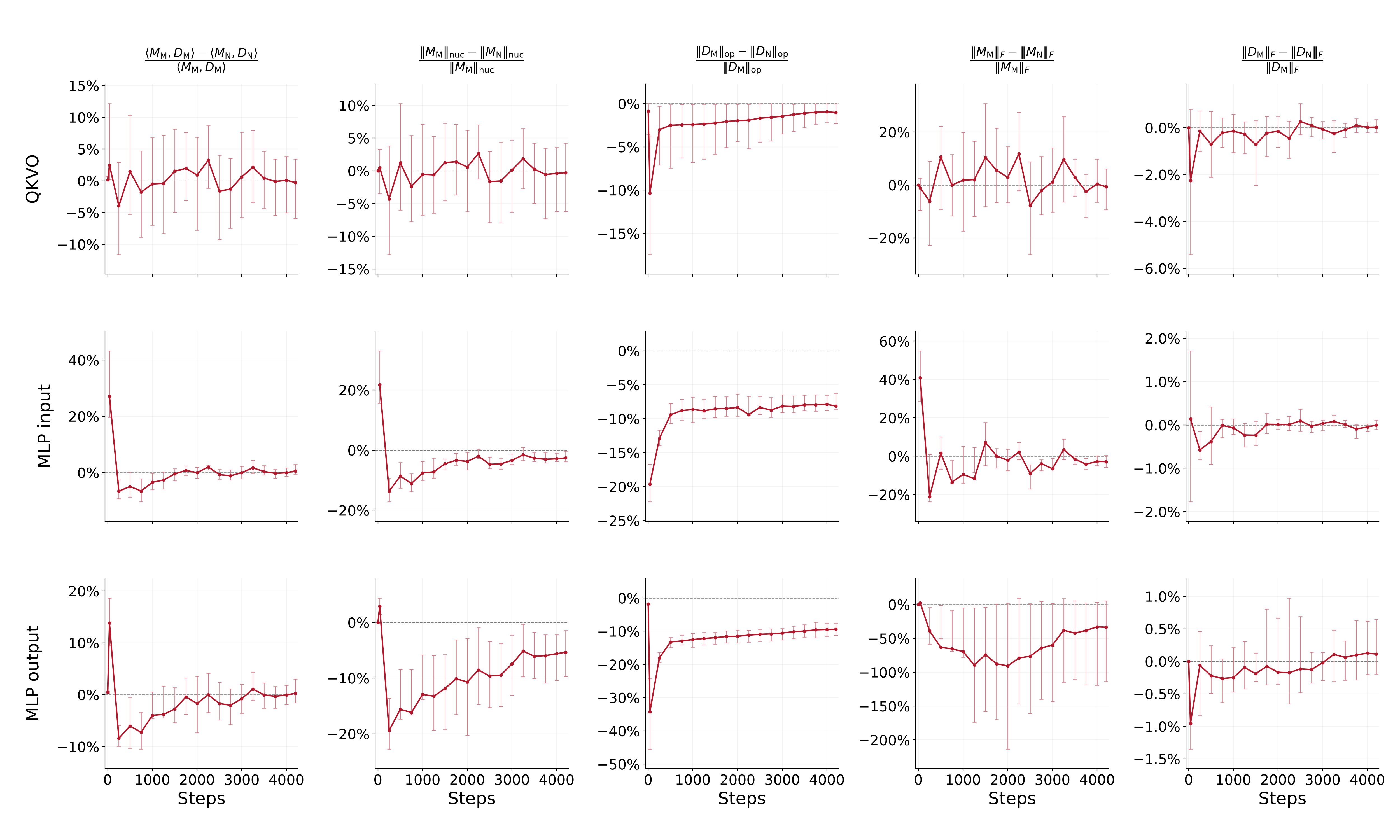}
\caption{Relative differences between Muon and NorMuon in the alignment and in the norms of the momentum and update direction, for different parameter groups. Error bars show the 10th--90th percentile range across the matrices in each group.}
\label{fig:alignment}
\end{figure}

\end{document}